\documentclass[accepted]{uai2026} % after acceptance, for a revised version; 
\usepackage[american]{babel}
\usepackage{amsmath}
\usepackage{amsfonts}
\usepackage{amssymb}
\usepackage{amsxtra}
\usepackage{amsthm}
\usepackage{mathrsfs}
\usepackage{color}
\usepackage[table]{xcolor}
\usepackage{algorithm,algorithmic}
\usepackage{booktabs}
\usepackage{tabularx}
\usepackage{multirow}
\usepackage{hyperref}
\usepackage{ragged2e}
\usepackage{bbm}
\newcolumntype{Y}{>{\RaggedRight\arraybackslash}X}

\usepackage{natbib} % has a nice set of citation styles and commands
\usepackage{mathtools} % amsmath with fixes and additions
\usepackage{booktabs} % commands to create good-looking tables
\usepackage{tikz} % nice language for creating drawings and diagrams

\newtheorem{theorem}{Theorem}[section]
\newtheorem{proposition}[theorem]{Proposition}
\newtheorem{lemma}[theorem]{Lemma}
\newtheorem{corollary}[theorem]{Corollary}
\theoremstyle{definition}

\newtheorem{assumption}[theorem]{Assumption}
\theoremstyle{remark}
\newtheorem{remark}[theorem]{Remark}
\definecolor{highlight}{RGB}{235, 240, 255}
\DeclareMathOperator*{\argmax}{arg\,max}

\title{Online Learning for Multi-Layer Hierarchical Inference \\ under Partial and Policy-Dependent Feedback
}

\author[1]{\href{mailto:<haoranz@austin.utexas.edu>?Subject=Online Learning for Multi-Layer Hierarchical Inference under Partial and Policy-Dependent Feedback}{Haoran Zhang}}
\author[1]{Seohyeon Cha}
\author[1]{Hasan Burhan Beytur}
\author[2]{Kevin S Chan}
\author[1]{Gustavo de Veciana}
\author[1]{Haris Vikalo}
\affil[1]{%
    Department of Electrical and Computer Engineering\\
    The University of Texas at Austin
}
\affil[2]{%
    DEVCOM Army Research Laboratory
}
  
\begin{document}
\maketitle

% Time scale for offloading and onloading

\begin{abstract}
% Hierarchical inference systems route tasks across multiple computational layers, where each node either finalizes a prediction locally or offloads the task to a node in the next layer for further processing. Learning optimal routing policies in such systems is challenging: inference loss is defined recursively across layers, while feedback on prediction error is revealed only when tasks reach a terminal oracle layer. This induces a partial-feedback learning problem in which observability probabilities depend recursively on downstream routing decisions, causing importance-weighted loss estimators to suffer from depth-amplified variance. We study online routing for multi-layer hierarchical inference under long-term resource constraints and terminal-only feedback. We formalize the recursive loss structure and show that naive importance-weighted contextual bandit methods become unstable as feedback probability decays along the hierarchy. To address this, we develop a variance-reduced EXP4-based algorithm integrated with Lyapunov optimization, yielding unbiased loss estimation and stable learning under sparse and policy-dependent feedback. We provide regret guarantees relative to the best fixed routing policy in hindsight and establish near-optimality under stochastic arrivals and resource constraints. Experiments on large-scale multi-task workloads demonstrate improved stability and performance compared to standard importance-weighted approaches, particularly in deep hierarchical systems.
Hierarchical inference systems route tasks across multiple computational layers, where each node may either finalize a prediction locally or offload the task to a node in the next layer for further processing. Learning optimal routing policies in such systems is challenging: inference loss is defined recursively across layers, while feedback on prediction error is revealed only at a terminal oracle layer. 
This induces a partial, policy-dependent feedback structure in which observability probabilities decay with depth, causing importance-weighted estimators to suffer from amplified variance.
We study online routing for multi-layer hierarchical inference under long-term resource constraints and terminal-only feedback. We formalize the recursive loss structure and show that naive importance-weighted contextual bandit methods become unstable as feedback probability decays along the hierarchy. To address this, we develop a variance-reduced EXP4-based algorithm integrated with Lyapunov optimization, yielding unbiased loss estimation and stable learning under sparse and policy-dependent feedback. We provide regret guarantees relative to the best fixed routing policy in hindsight and establish near-optimality under stochastic arrivals and resource constraints. Experiments on large-scale multi-task workloads demonstrate improved stability and performance compared to standard importance-weighted approaches.
% , particularly in deep hierarchical systems.
\end{abstract}

\section{Introduction}\label{sec:intro}

Large language models (LLMs) and related foundation models have enabled a broad range of tasks, from text generation to multimodal reasoning~\citep{achiam2023gpt,wolf2019huggingface,chang2024survey}. In practice, these models exhibit varied strengths -- lightweight models are efficient but less accurate on complex tasks, whereas larger models provide stronger performance at substantially higher computational and communication cost. As a result, selecting and deploying appropriate models for diverse workloads has become a major systems and learning challenge~\citep{mei2025omnirouter}.

A key question in this setting is where to execute inference tasks. Cloud-based inference achieves high accuracy but incurs latency, bandwidth, and monetary cost~\citep{friha2024llm}. Deploying models at the edge reduces communication overhead but introduces potentially severe memory and computation constraints. Although quantization and compression techniques help alleviate some of these limitations~\citep{frantar2022gptq,li2025gptaq,cha2026regularized}, lightweight edge models often struggle to perform on complex inputs. This trade-off motivates hierarchical inference (HI) architectures~\citep{yin2018hierarchical}, in which tasks are first processed locally and selectively offloaded to progressively more capable upstream nodes when necessary.

While HI improves resource utilization by processing easy tasks locally and reserving costly upstream computation for harder cases, learning optimal routing policies in such systems is challenging: 
(1) Decisions are sequential across layers, and the inference loss is defined recursively along the routing path. 
(2) Feedback on prediction error is typically revealed only when tasks reach a final oracle layer (e.g., cloud verification or human judgment), yielding sparse, terminal-only observations.
This induces a partial-feedback learning problem where the probability that a task's loss is observed depends recursively on subsequent routing decisions, coupling policies across layers and amplifying the variance of standard importance-weighted estimators. Existing HI methods typically focus on shallow architectures, static optimization, and/or single-destination offloading~\citep{beytur2024optimization,yuan2025local,cha2025batching,guo2020online}, and do not address this recursive, policy-dependent feedback structure in dynamic multi-layer systems.

\begin{figure}
    \centering
    \includegraphics[width=0.95\linewidth]{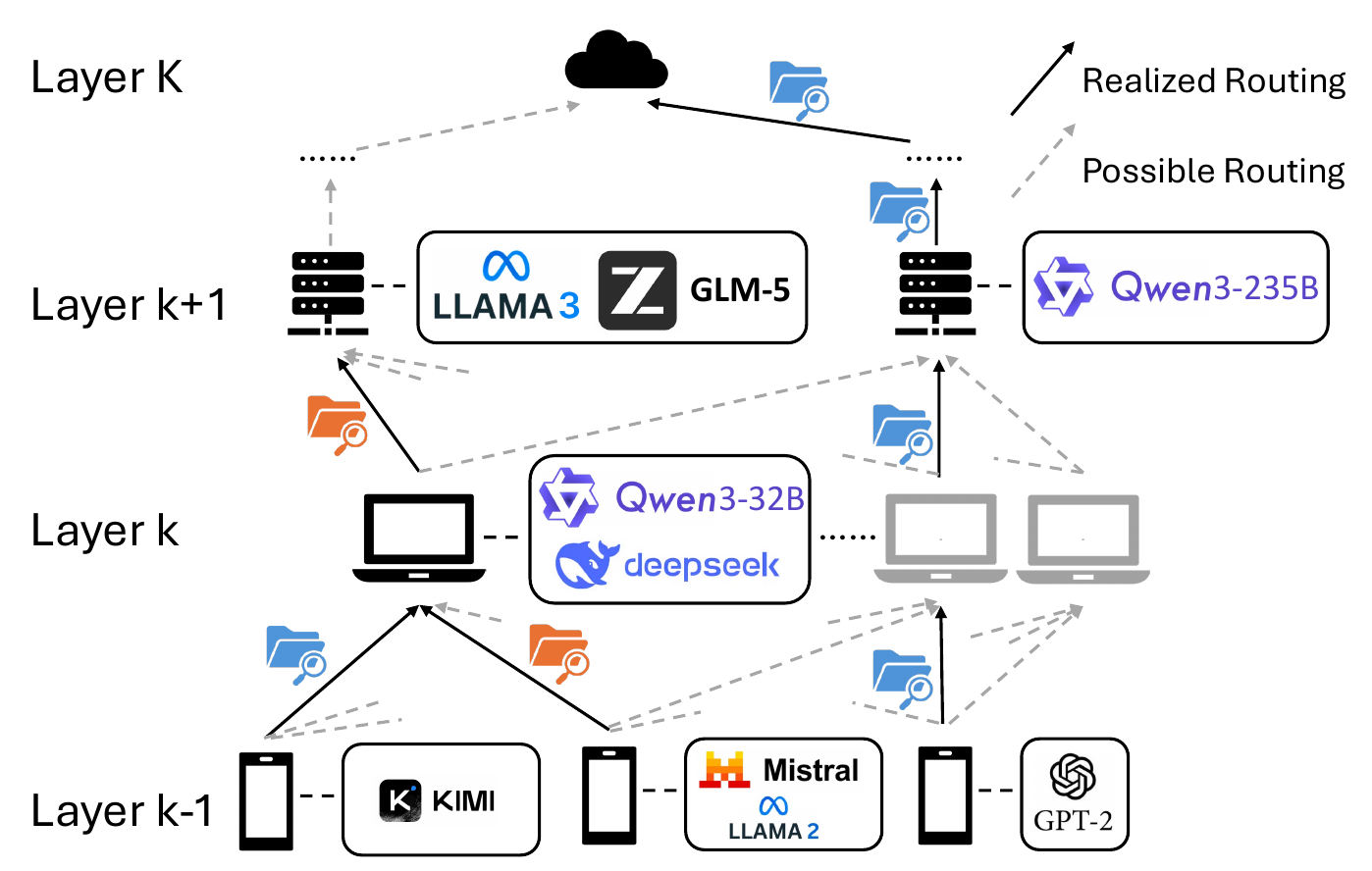}
    \caption{Hierarchical inference with multi-destination offloading. Routing decisions couple expected inference loss with upstream resource consumption. Prediction error is observed only at the terminal layer, resulting in policy-dependent feedback.}
    \label{fig:overview}
    \vspace{-0.15in}
\end{figure}

In this work (Figure \ref{fig:overview}), we study online routing in multi-layer HI systems that allow multi-destination offloading and dynamic model placement. While such systems improve utilization of resources, learning optimal routing policies is challenging due to the structure of the induced learning problem. First, routing decisions are sequentially coupled across layers, as the loss incurred by a task depends recursively on downstream decisions along its routing path. Second, feedback on inference error is revealed only if and when a task reaches the final (oracle or human judge) layer. As a result, the probability that a loss for a task is observed depends recursively on downstream routing policies, i.e., observability is policy-dependent and depth-sensitive. Third, routing policies must satisfy long-term resource constraints, and are affected by model placement decisions, further coupling learning and computation/communication across layers. Together, these characteristics define a partial-feedback online learning problem that is not addressed by the existing shallow or single-destination formulations of the problem.

To address these challenges, we formalize routing in HI as an online learning problem under recursive loss and partial (terminal-only) feedback under long-term resource constraints. We develop a distributed algorithm that integrates Lyapunov optimization for enforcing constraints with a variance-reduced EXP4-based contextual bandit method for routing. The main technical component of the proposed framework is an unbiased, variance-reduced loss estimator designed for online learning under a policy-dependent feedback structure induced by hierarchical routing. This enables stable policy learning despite depth-dependent feedback probabilities due to terminal-only observations. In addition, we incorporate periodic model placement updates under memory constraints, enabling routing and resource allocation to adapt to stochastic workloads. We evaluate the proposed algorithm on a multi-task HI benchmark spanning language and vision workloads, demonstrating improved stability and performance under sparse terminal feedback compared to standard importance-weighted methods.

%To further validate the proposed framework under realistic conditions, we construct a large-scale multi-modal benchmark covering heterogeneous tasks and models. The dataset consists of over 70,000 samples spanning language and vision workloads across more than 20 open-source models, capturing practical challenges such as heterogeneous model capability and partial task coverage. This benchmark provides a comprehensive testbed for studying learning stability, system efficiency, and scalability in hierarchical inference.

Our main contributions are as follows.

\begin{itemize}

\item {\it Structured learning formulation.} We formalize multi-layer hierarchical inference as an online learning problem with recursively defined loss and terminal-only feedback, in which the probability of observing a task's loss is policy-dependent and depth-sensitive.

\item {\it Variance-reduced learning algorithm.} We develop a distributed online routing algorithm that integrates Lyapunov optimization for long-term resource constraints with a variance-reduced EXP4-based estimator tailored to the recursive, policy-dependent feedback structure induced by hierarchical routing.
        
\item {\it Theoretical guarantees.} We establish sublinear regret guarantees for the proposed routing algorithm relative to the best fixed policy in hindsight, and ensure that long-term resource constraints are satisfied under stochastic task arrivals.

\item {\it Empirical validation.} We evaluate the proposed method on multi-task hierarchical inference workloads spanning language and vision settings, demonstrating improved learning stability and routing performance under sparse, policy-dependent feedback compared to standard importance-weighted approaches.

\end{itemize}

\section{Related Work}\label{sec:related}

\textbf{Online Learning under Partial Feedback.}
In adversarial and contextual bandit settings, a learner sequentially selects actions and observes only partial feedback, typically the loss of the chosen action. Algorithms such as EXP4 and related expert-based methods~\citep{auer2002nonstochastic,beygelzimer11a,senContextualBanditsStochastic2018,ito} as well as UCB-type approaches~\citep{agarwalb14,linUCB} achieve sublinear regret using importance-weighted loss estimators, where the observed loss is scaled by the inverse of the action-selection probability. While unbiased, these estimators suffer from high variance when action-selection probabilities are small. Beyond standard bandit feedback, feedback models such as partial monitoring~\citep{cesa2006regret} and feedback graphs~\citep{alon2015online} allow limited or indirect observations. However, in these formulations the feedback structure is fixed and exogenous, independent of the learner’s policy. Online learning with long-term constraints has also been studied using Lyapunov and primal–dual methods to ensure feasibility under stochastic dynamics while maintaining sublinear regret~\citep{neely2010stochastic,CayciZE22,liu_efficient_2021}. In contrast, the feedback structure induced by hierarchical routing is both \emph{partial} and \emph{policy-dependent}: the probability of observing loss depends recursively on downstream routing decisions, creating depth-sensitive observability not captured by existing models.

\textbf{Hierarchical Inference (HI).}
Hierarchical inference has been studied in distributed learning and edge computing systems, with most prior work considering two-layer architectures with binary offloading decisions~\citep{yin2018hierarchical}. \cite{al2023case,al2024regret} established regret guarantees for threshold-based policies under single-destination settings, while \cite{beytur2024optimization} integrated contextual bandits with Lyapunov optimization to handle resource constraints. Recent works explore deeper client–edge–cloud hierarchies~\citep{cha2025batching,ma2025hyperion,yuan2025local}, but typically assume static optimization, centralized coordination, or restricted routing topologies. These formulations do not explicitly analyze recursively defined, policy-dependent feedback arising in multi-layer, multi-destination routing under stochastic task arrivals.

% \textbf{LLM Routing.}
% Prior work on LLM routing primarily studied online model selection in centralized systems hosting multiple models on a single server~\citep{wolf2019huggingface,ong2024routellm,chen2023frugalgpt,hu2024routerbench}. These approaches focus on single-hop cost–quality trade-offs and do not consider distributed multi-layer routing in which decisions propagate across layers, nor do they analyze recursively defined, policy-dependent feedback induced by terminal-only supervision and dynamic model placement under memory constraints.

To our knowledge, recursively defined, policy-dependent feedback in hierarchical inference systems has not been formally analyzed. This work develops a variance-reduced learning framework and establishes regret guarantees for this practically motivated setting.

\section{System Model}
\label{sec:sys}
For clarity, we summarize notation in Table \ref{tab:notation-main} in Appendix.

{\bf Online Learning Perspective.} We model HI routing as a decentralized online learning problem with partial and policy-dependent feedback. At each intermediate computing node, arriving jobs generate contextual observations (e.g., task type and confidence score), upon which the node selects a routing action, such as local termination or upstream offloading. 
The objective is to minimize the long-term expected inference error across jobs. The inference loss is defined recursively along the routing path, since downstream routing decisions determine the node at which a job ultimately terminates and thus the final incurred error. Routing decisions also incur upstream resource costs, which are regulated through long-term node-level constraints. Feedback is revealed only if a job is routed to the oracle layer; as a result, the probability of observing loss at a node depends recursively on downstream routing decisions under the global policy. This yields an online learning problem with recursively defined loss and terminal-only, policy-dependent feedback.

{\bf Network Model.} As illustrated in Fig.~\ref{fig:overview}, we consider a set of computing nodes $\mathcal{N}$ partitioned into $K$ non-empty hierarchical layers, denoted by $\mathcal{N}_1, \dots, \mathcal{N}_K$. 
Layer $k=1$ consists of edge devices (clients) that generate inference jobs, layers $1 < k < K$ contain intermediate computing nodes, and layer $K$ represents an oracle node (e.g., a cloud service or human judge) that provides ground-truth supervision. The layers form an inference hierarchy in which a job arriving at a node in layer $k<K$ is executed locally to produce an output and confidence score. Based on this local evaluation, the node either terminates the inference process by accepting its output or offloads the job to a node in layer $k+1$ for further processing. The oracle layer $K$ always terminates the routing process.

\textbf{Job Arrivals.}
The system operates in a slotted time structure where each slot $t$ has a duration $\tau$. 
In each time slot $t$,
the system receives a set of jobs $\mathcal{J}(t)$.
Each job $j \in \mathcal{J}(t)$ is characterized by a task type $y(j)\in\mathcal{Y}$ (e.g., text summarization, image editing) and is initially assigned to a specific entry node $n_1^j\in\mathcal{N}_1$ in the first layer. 
We let $t^j$ denote the arrival slot of job $j$, i.e., $j\in\mathcal{J}(t^j)$.
For any node $n \in \mathcal{N}$, let $\mathcal{I}_n(t)$ denote the distribution of jobs arriving at node $n$ during slot $t$.
We make the following assumption to facilitate the theoretical analysis in Section \ref{sec:methodology}.

\begin{assumption}[Stationary Arrivals]\label{assumption:arrival}
% The sequence of random job sets $\{\mathcal{J}(t)\}_{t>1}$ is i.i.d. over time with bounded first and second moments. Specifically, for all time slots $t$, there exist finite constants $\kappa_1, \kappa_2 > 0$ such that $\mathbb{E}[|\mathcal{J}(t)|] \le \kappa_1$ and $\mathbb{E}[|\mathcal{J}(t)|^2] \le \kappa_2$. 
The sequence $\{\mathcal{J}(t)\}_{t>1}$ is i.i.d. over time. The total arrival count $|\mathcal{J}(t)|$ has bounded moments, i.e., $\mathbb{E}[|\mathcal{J}(t)|] \le \kappa_1$ and $\mathbb{E}[|\mathcal{J}(t)|^2] \le \kappa_2$ for $\kappa_1, \kappa_2 > 0$. 
For each entry node $n \in \mathcal{N}_1$, the arrival distribution is stationary, i.e., $\mathcal{I}_n(t)=\mathcal{I}_n$ for all $t$, and the per-slot arrival count is i.i.d. over time.
\end{assumption}

\textbf{Hierarchical Inference Process.} 
Consider a job $j \in \mathcal{J}(t)$ originating at entry node $n_1^j \in \mathcal{N}_1$. 
The inference process proceeds hierarchically across layers. 
We define $\pi_{n}(\cdot)$ as the online routing policy deployed at node $n$, mapping an arriving job to a probability distribution over its routing decisions. As the policy updates across time slots, the exact distribution applied to job $j$ is bound to its arrival slot $t^j$.
Let $\boldsymbol{\pi}(\cdot)\! =\! \{\pi_n(\cdot)\!: \!n \in \mathcal{N}\}$ denote the joint routing policy. Evaluated at $t^j$, this ensemble of local mappings acts as a joint function of $j$ to dynamically induce a random routing path. For brevity, we hereafter omit the functional arguments, denoting the policies as $\pi_n$ and $\boldsymbol{\pi}$. Let $n_k^{\boldsymbol{\pi},j} \in \mathcal{N}_k$ denote the node in layer $k > 1$ visited by job $j$ (if reached) under $\boldsymbol{\pi}$.
When job $j$ arrives at the node $n=n_k^{\boldsymbol{\pi},j}$, the node first performs local inference. This process yields a confidence score $z_n(j)$, which represents a realization of the random confidence variable $Z_n(j)$. 
Based on $z_n(j)$, the node decides whether to terminate the inference process locally or offload the job to a node in the next layer.
At any node $n\in\mathcal{N}_k$, the offloading action is denoted by $a\in \mathcal{A}_k$, where the action space is defined as $\mathcal{A}_k\triangleq \{0\}\cup \mathcal{N}_{k+1}$. The action $a=0$ denotes local termination, and any $a=n' \in \mathcal{N}_{k+1}$ denotes offloading the job to the upstream node $n'$.
% For a node $n\in\mathcal{N}_k$, action space consists of either local finalization (i.e., no offloading) or offloading the job to a node in the next layer $k+1$. Formally, the action set of a node in layer $k$ is defined as $\mathcal{A}_k\triangleq \{0\}\cup \mathcal{N}_{k+1}$.

For a job $j$ that reaches layer $k < K$, at node $n = n_k^{\boldsymbol{\pi},j} \in \mathcal{N}_k$, we define $\mathbf{O}_n^{\pi_n,j}$ as the random one-hot action vector induced by the policy $\pi_n$.
Its specific realization is represented by the one-hot vector $\mathbf{o}_n^{\pi_n,j} \in \{0,1\}^{|\mathcal{A}_k|}$, defined as
\(
\mathbf{o}_n^{\pi_n,j} = \{o_n^{\pi_n,j}(a)\}_{a \in \mathcal{A}_k}.
\) Note that
$o_n^{\pi_n,j}(0) = 1$ indicates local termination, where node $n$ accepts the local inference result and concludes the job's routing.
For any node $n'\in\mathcal{N}_{k+1}$, $o_{n}^{\pi_n,j}(n')=1$ indicates that the job is offloaded to node $n'$ (i.e., $n_{k+1}^{\boldsymbol{\pi},j}=n'$) for further processing.

This sequence of decentralized routing decisions induces an inference path.
Let $\Omega^{\boldsymbol{\pi},j}$ be the random inference path induced by the joint policy $\boldsymbol{\pi}$, representing the sequence of nodes visited by job $j$. For a given realization of all routing actions, the deterministic path is denoted by $\omega^{\boldsymbol{\pi},j}$. 
We define the exit layer, denoted by $l(\boldsymbol{\pi},j)$, as the layer at which the routing terminates: 
\(
l(\boldsymbol{\pi},j) \triangleq \min\left\{k :\exists\, n\in\mathcal{N}_k \text{ s.t. } o_{n}^{\pi_n,j}(0)=1\right\}
\).
Therefore, the deterministic path is expressed as
\(
\omega^{\boldsymbol{\pi},j}=(n_1^j,n_2^{\boldsymbol{\pi},j},\dots,n_{l(\boldsymbol{\pi},j)}^{\boldsymbol{\pi},j})\).
For any intermediate layer $1\leq k<l(\boldsymbol{\pi},j)$, the subsequent node $n_{k+1}^{\boldsymbol{\pi},j}$ is strictly dictated by the realized offloading action of node $n_k^{\boldsymbol{\pi},j}$. The path concludes at layer $l(\boldsymbol{\pi},j)\leq K$ upon the selection of the local termination action. We assume that routing and execution for jobs arriving in slot $t$ complete within the same slot.

\textbf{Dynamic Model Onloading.}
We generalize the system by allowing periodic model placement updates to accommodate workload shifts.
Every $D>1$ time slots (i.e., $t \pmod D = 1$), each node $n$ can update its set of loaded models, denoted by $\mathcal{M}_{n}(t) \subseteq \mathcal{M}$, where $\mathcal{M}$ is the global set of available models.
The model placement decision is parameterized by a binary vector $\mathbf{x}_{n}(t)=\{x_{n}^m(t)\}_{m\in\mathcal{M}}$, where $x_{n}^m(t)=1$ if model $m$ resides in node $n$'s memory during slot $t$, strictly bounded by local capacity constraints.

{\bf Error and Cost Model.} Consider a job $j \in \mathcal{J}(t)$ of task type $y(j)\in\mathcal{Y}$. Inference error arises from stochastic decoding or evaluation mechanisms~\citep{holtzman2019curious}. We capture this inherent randomness by a noise variable $\zeta$.
For each model $m \in \mathcal{M}$, let $\epsilon(j,m,\zeta)\in\{0,1\}$ be a Bernoulli random variable representing the error incurred when model $m$ processes job $j$. When job $j$ is processed locally at node $n$, the node selects a single model from its loaded set $\mathcal{M}_n(t^j) \triangleq \{m\in\mathcal{M}: x_n^m(t^j)=1\}$ according to a fixed model-selection rule $\sigma_n(\cdot)$. 
% Let $m_n^j $ denote the selected model. 
Let $\sigma_n\!\big(j,\mathbf{x}_n(t^j)\big)\in \mathcal{M}_n(t^j)$ denote the selected model.
The random inference error incurred at node $n$ is then
\begin{align}
B\big(j,\mathbf{x}_n(t^j)\big)=\epsilon\left(j,\sigma_n\!\left(j,\mathbf{x}_n(t^j)\right),\zeta\right),\label{eq:inference_error}
\end{align}
with realization $b\big(j,\mathbf{x}_n(t^j)\big)$. For unsupported modalities, we define $\epsilon(j,m,\zeta)=1$ almost surely. If node $n$ has no loaded model capable of processing task type $y(j)$, we define $B\big(j,\mathbf{x}_n(t^j)\big)=1$. The terminal cloud node $n\in\mathcal{N}_K$ operates as a perfect oracle; thus,
$B\big(j,\mathbf{x}_n(t^j)\big)=0 \text{ for } n\in\mathcal{N}_K$. All expectations are taken with respect to job arrivals, routing randomization, and model stochasticity.

\begin{remark}[Partial Feedback] 
% Each node along the realized routing path executes the job and produces an output. However, the supervisory signal required to determine the inference error is revealed only if the job reaches the cloud layer $K$. 
% Consequently, for any $n \in \mathcal{N}\setminus\mathcal{N}_K$, the realized loss $b(j,\mathbf{x}_n(t^j))$ is observed if and only if $j$ is routed to the oracle. 
% This terminal-only supervision induces partial and policy-dependent learning signals, since the probability of observing feedback depends recursively on downstream routing decisions.
For any $n \in \mathcal{N}\setminus\mathcal{N}_K$, the loss $b(j,\mathbf{x}_n(t^j))$ is observed exclusively if job $j$ reaches layer $K$. This terminal-only supervision induces a partial, policy-dependent learning signal recursively coupled to subsequent routing decisions.
\end{remark}

Upon termination at exit layer $l(\boldsymbol{\pi},j) \leq K$, the system incurs the final inference error $b(j,\mathbf{x}_{n_{l(\boldsymbol{\pi},j)}^{\boldsymbol{\pi},j}}(t^j))$.
Conversely, routing a job to an upstream layer trades potential error reduction for additional resource consumption. 
Specifically, an offloading step from node $n' \in \mathcal{N}_{k-1}$ to node $n \in \mathcal{N}_k$ (where $k>1$) incurs a deterministic and strictly bounded resource cost denoted by $c^j(n', n)$. 
This term explicitly encapsulates both the network communication overhead required to transfer job $j$ and the subsequent computational load incurred at the receiving node $n$. We formally define the total random resource cost incurred by node $n \in \mathcal{N}_k$ ($k>1$) for receiving and processing offloaded jobs during time slot $t$ as
% \footnote{Cost is attributed to the receiver because it executes the job and bears the primary resource consumption.}
\begin{align}
C_{n}^{\boldsymbol{\pi}}(t)=\sum_{j\in\mathcal{J}(t)}\sum_{n'\in\mathcal{N}_{k-1}} c^j(n', n)\mathbbm{1}_{n=n_{k}^{\boldsymbol{\pi},j}}\mathbbm{1}_{n'=n_{k-1}^{\boldsymbol{\pi},j}} .
\label{eq:costC}
\end{align}
% Equations~\eqref{eq:inference_error} and \eqref{eq:costC} form the localized optimization trade-off at any intermediate node $n$: 
% Upon processing job $j$, the node observes the realized confidence score $z_{n}(j)$, which dictates the routing action. 
% A local termination decision ($o_{n}^{\pi_n,j}(0)=1$) restricts the system penalty to the local inference error, whereas an offloading decision incurs deterministic resource consumption in exchange for potentially lower error via upstream models. 
Based on confidence $z_{n}(j)$, node $n$ faces a strict trade-off: local termination ($o_{n}^{\pi_n,j}(0)=1$) incurs only local inference error, while offloading exchanges deterministic resource consumption for potentially reduced upstream error.
The empirical derivation of $z_{n}(j)$ is detailed in Section~\ref{sec:exp}.

\textbf{Problem Formulation.}
The global objective is to minimize the long-term expected inference error across all jobs, subject to long-term resource constraints and strict memory capacities. 
The optimization problem is formulated as
\begin{align}
\min_{\boldsymbol{\pi},\{\mathbf{x}_{n}\!(t)\}_{n\in\mathcal{N}\!,t>1}}  
\limsup_{T \to \infty} \frac{1}{T} \sum_{t=1}^{T}\mathbb{E}\!\!\left[\sum_{j\in\mathcal{J}(t)}\!\! B\!\left(j,\mathbf{x}_{n_{l(\boldsymbol{\pi},j)}^{\boldsymbol{\pi},j}}(t^j)\right)\right],\nonumber
\end{align}
where $n^{\boldsymbol{\pi},j}_{l(\boldsymbol{\pi},j)}$ denotes the specific node at which job $j$ terminates.
This minimization is subject to the following long-term resource and instantaneous memory constraints:
\begin{align}
& \limsup_{T \to \infty} \frac{1}{T} \sum_{t=1}^{T} \mathbb{E}\left[C^{\boldsymbol{\pi}}_{n}(t)\right] \leq \gamma_{n} \tau, \, \forall n\in\mathcal{N}\setminus \mathcal{N}_{1}, \label{eq:constraint_offload} \\
&\sum_{m \in \mathcal{M}} x_{n}^{m}(t) s_m \leq \mu_{n},\, \forall n\in\mathcal{N}.\label{eq:constraint_onload}
\end{align}
Here, $\gamma_{n}$ defines the maximum allowable long-term average resource consumption rate per time slot of length $\tau$, and $\mu_{n}$ represents the physical memory capacity for node $n$. 
Directly solving this stochastic formulation is intractable due to the unknown arrival dynamics and the partial, policy-dependent feedback. The subsequent section decomposes the problem across two distinct timescales: online distributed learning of the routing policy $\boldsymbol{\pi}$, and periodic, workload-aware updates to the model placement $\mathbf{x}_n(t)$.

\section{Methodology}
\label{sec:methodology}

We decompose the hierarchical inference problem into a variance-reduced EXP4 learning framework for dynamic routing (Sections~\ref{sec:lyapunov}--\ref{sec:theory}) and a greedy scheme for periodic model placement (Section~\ref{sec:onloading}). For routing, we apply Lyapunov optimization to enforce the long-term resource constraints~\citep{neely2010stochastic}, and formulate a variance-reduced EXP4 algorithm to overcome the sparse, depth-amplified biased learning signals in multi-layer HI systems.

\subsection{Lyapunov Resource Optimization}
\label{sec:lyapunov}
We use the Lyapunov optimization framework to convert the long-term constraints into virtual queue stability conditions, inducing an online control penalty into the per-slot objective.

\textbf{Virtual Queues.}
% To incorporate the long-term cost constraint into online decision-making, we introduce virtual queues. 
For each node $n \in \mathcal{N}\setminus \mathcal{N}_{1}$, we define a virtual queue $Q_{n}(t)$ that tracks the deviation between instantaneous resource consumption and the per-slot budget $\gamma_{n}\tau$. 
We initialize the stochastic process such that $Q_{n}(1)=0$ with probability $1$.
The virtual queue process evolves as
\begin{equation}
Q_{n}(t+1) = \max\left[Q_{n}(t) + C^{\boldsymbol{\pi}}_{n}(t) - \gamma_{n} \tau, \, 0\right].\nonumber
\end{equation}
Let $\mathbf{Q}(t)=\{Q_n(t)\}_{n \in \mathcal{N}\setminus\mathcal{N}_1}$ be the random vector of virtual queues.
When the realized routing cost exceeds the budget $\gamma_{n}\tau$, the queue grows; otherwise, it decreases. 
\begin{proposition}\label{prop:queue}
For each node $n\in\mathcal{N}\setminus\mathcal{N}_1$, If the virtual queue $Q_{n}(t)$ is mean-rate stable, i.e., 
\(
\lim_{T \to \infty} \frac{\mathbb{E}[Q_{n}(T)]}{T} = 0  ,  
\)
then the resource constraint in Eq.~\eqref{eq:constraint_offload} is satisfied.
\end{proposition}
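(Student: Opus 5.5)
The argument is the standard virtual-queue telescoping from Lyapunov optimization: the queue dominates the cumulative excess of cost over budget, so mean-rate stability forces the time-averaged excess to vanish.

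\textbf{Step 1: a per-slot inequality.} From the update $Q_n(t+1)=\max[Q_n(t)+C_n^{\boldsymbol{\pi}}(t)-\gamma_n\tau,0]$ I will use that $\max[x,0]\ge x$. This gives, for every slot $t$ and every sample path,
\begin{equation*}
Q_n(t+1)\ \ge\ Q_n(t)+C_n^{\boldsymbol{\pi}}(t)-\gamma_n\tau,
\end{equation*}
which rearranges to $C_n^{\boldsymbol{\pi}}(t)-\gamma_n\tau\le Q_n(t+1)-Q_n(t)$. Since the cost terms $c^j(n',n)$ are bounded and $\mathbb{E}[|\mathcal{J}(t)|]\le\kappa_1$ by Assumption~\ref{assumption:arrival}, $\mathbb{E}[C_n^{\boldsymbol{\pi}}(t)]$ is finite for each $t$. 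Induction on $t$ then shows $\mathbb{E}[Q_n(t)]<\infty$, so taking expectations is legitimate.

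\textbf{Step 2: telescoping.} Summing the inequality over $t=1,\dots,T$ and using $Q_n(1)=0$ almost surely gives $\sum_{t=1}^T C_n^{\boldsymbol{\pi}}(t)-T\gamma_n\tau\le Q_n(T+1)$. Taking expectations and dividing by $T$ yields
\begin{equation*}
\frac{1}{T}\sum_{t=1}^{T}\mathbb{E}\left[C_n^{\boldsymbol{\pi}}(t)\right]\ \le\ \gamma_n\tau+\frac{\mathbb{E}[Q_n(T+1)]}{T}.
\end{equation*}

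\textbf{Step 3: passing to the limit.} Take $\limsup_{T\to\infty}$ of both sides. Write $\mathbb{E}[Q_n(T+1)]/T=\frac{T+1}{T}\cdot\frac{\mathbb{E}[Q_n(T+1)]}{T+1}$. By the mean-rate stability hypothesis the second factor tends to $0$, and the first tends to $1$, so the product tends to $0$. This gives exactly constraint~\eqref{eq:constraint_offload} for node $n$, and the argument applies to each $n\in\mathcal{N}\setminus\mathcal{N}_1$ separately.

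There is no real obstacle. The only points needing care are the integrability justification in Step 1 and the $T$ versus $T+1$ index shift in Step 3, both of which are routine.
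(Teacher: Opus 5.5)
Your proof is correct and follows the paper's own argument: the per-slot bound $Q_n(t+1)-Q_n(t)\ge C_n^{\boldsymbol{\pi}}(t)-\gamma_n\tau$ from the $\max$, telescoping with $Q_n(1)=0$, taking expectations, dividing by $T$, and invoking mean-rate stability. You are slightly more careful than the paper in three places: you use $\limsup$ to match the form of Eq.~\eqref{eq:constraint_offload}, you handle the $T$ versus $T+1$ index shift explicitly, and you justify integrability, whereas the paper takes ordinary limits without checking that they exist.
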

The proof is provided in Appendix~\ref{app:proposition}. 
By Proposition~\ref{prop:queue}, enforcing Eq.~\eqref{eq:constraint_offload} reduces to ensuring queue stability. To formalize this, we define the Lyapunov function $\mathcal{L}(t) = \frac{1}{2} \sum_{n\in\mathcal{N}\setminus \mathcal{N}_1} (Q_{n}(t))^2$.
Let $\mathbf{q}(t) = \{q_n(t)\}_{n \in \mathcal{N}\setminus\mathcal{N}_1}$ be the realized queue state observed at the beginning of slot $t$. The conditional Lyapunov drift, defined as $\Delta(\mathbf{q}(t)) = \mathbb{E}[\mathcal{L}(t+1)-\mathcal{L}(t) \mid \mathbf{Q}(t) = \mathbf{q}(t)]$, is strictly bounded by:
\begin{align}
\Delta(\mathbf{q}(t)) \le \alpha + \sum_{n\in\mathcal{N}\setminus \mathcal{N}_{1}} q_{n}(t)\mathbb{E}\left[ C^{\boldsymbol{\pi}}_{n}(t)-\gamma_n\tau \right], \label{eq:driftbound}
\end{align}
where $\alpha>0$ is a finite constant determined by the bounded per-slot cost and Assumption~\ref{assumption:arrival} (detailed in Appendix~\ref{app:driftbound}).
% By Proposition~\ref{prop:queue}, minimizing a per-slot objective that incorporates the bound \eqref{eq:driftbound} enforces the constraint in Eq.~\eqref{eq:constraint_offload}.

\textbf{Drift-Plus-Penalty Minimization.}
Following the Lyapunov framework, we jointly stabilize the queues and minimize the expected inference error by optimizing the drift-plus-penalty term: $\Delta(\mathbf{q}(t))+v\mathbb{E}[\sum_{j\in\mathcal{J}(t)} B(j,\mathbf{x}_{n_{l(\boldsymbol{\pi},j)}^{\boldsymbol{\pi},j}}(t^j))]$, where $v>0$ balances performance optimality against constraint satisfaction. 
Applying the bound (Eq.~\eqref{eq:driftbound}) yields the per-slot objective, which seeks the joint policy $\boldsymbol{\pi}$ and model placements $\{\mathbf{x}_{n}(t)\}_{n\in\mathcal{N}}$ that minimize the drift-plus-penalty given the realized queue states at slot $t$:
\begin{align}
\min_{\boldsymbol{\pi},\{\mathbf{x}_{n}(t)\}_{n\in\mathcal{N}}} &\sum_{n \in \mathcal{N}\setminus\mathcal{N}_1} q_{n}(t) \mathbb{E}\left[ C^{\boldsymbol{\pi}}_{n}(t) \right] \nonumber\\
&+ v \mathbb{E}\left[ \sum_{j \in \mathcal{J}(t)} B\left(j,\mathbf{x}_{n_{l(\boldsymbol{\pi},j)}^{\boldsymbol{\pi},j}}(t^j)\right) \right],\label{eq:obj_perslot}
\end{align}

This isolates the long-term stochastic optimization into a sequence of unconstrained decisions in each slot. 
The inference error $B(j,\mathbf{x}_{n_{l(\boldsymbol{\pi},j)}^{\boldsymbol{\pi},j}}(t^j))$ is only observed under terminal-only, partial feedback, therefore, Eq.~\eqref{eq:obj_perslot} cannot be evaluated directly, necessitating an online learning method.

\subsection{Hierarchical Routing Bandits}\label{sec:variance}
We model the decentralized offloading decision at each node as a contextual bandit problem governed by the per-slot objective~\eqref{eq:obj_perslot}. Each node $n$ maintains a routing policy learned online via EXP4~\citep{auer2002nonstochastic}.

% At slot $t$, the policy is realized as a probabilistic action vector $\mathbf{p}_{n_k}(t)\in\mathbb{R}^{|\mathcal{A}_k|}$.

Let $n\in\mathcal{N}\setminus \mathcal{N}_K$ be an intermediate node, and let $\mathcal{U}_n$ be its set of valid routing destinations (i.e., if $n \in \mathcal{N}_k$, then $\mathcal{U}_n = \mathcal{N}_{k+1}$).
We define a set of experts $\mathcal{H}_y$ for each task type $y$, where each expert $h \in \mathcal{H}_y$ applies an offloading threshold $\theta_{h}\in[0,1]$. 
To couple the local thresholding logic with the multi-destination routing, we define the joint expert space $\mathcal{E}_n(y) = \mathcal{H}_y \times \mathcal{U}_n$, with cardinality $|\mathcal{E}_n(y)| = |\mathcal{H}_y| |\mathcal{U}_n|$.
A joint expert $e = (h, n') \in \mathcal{E}_n(y)$ represents a deterministic policy $\pi_{(h,n')}$: for an arriving job $j$ of type $y$ with realized confidence $z_n(j)$, if $z_n(j) < \theta_h$, recommend offloading to $n'$; otherwise, recommend local termination.

The node maintains a probability weight $w_{n,y}^{e}(t)$ for each joint expert, satisfying $\sum_{e \in \mathcal{E}_n(y)} w_{n,y}^{e}(t) = 1$. For a job $j$ arriving at slot $t^j$, the probability of offloading to upstream node $n' \in \mathcal{U}_n$ aggregates the weights of all experts advocating for $n'$ whose thresholds exceed the local confidence:
\begin{align}
p^j_{n}(n') &= \sum_{h \in \mathcal{H}_{y(j)}}  w_{n,y(j)}^{(h,n')}(t^j) \mathbbm{1}_{\theta_h > z_n(j)}.
\end{align}
The probability of local termination (action $a=0$) is
\begin{align}
p^j_{n}(0) &= \sum_{e=(h,n') \in \mathcal{E}_n(y(j))}  w_{n,y(j)}^{e}(t^j) \mathbbm{1}_{\theta_h \leq z_n(j)}.
\end{align}
% Recall $\mathbf{O}_{n}^{\pi_n,j} \in \{0, 1\}^{|\mathcal{U}_n| + 1}$ is the random one-hot action vector generated by the node $n$ for job $j$.
% Its expectation satisfies $\mathbb{E}[O_{n}^{\pi_n,j}(a)] = p^j_{n}(a)$ for all actions $a \in  \{0\}\cup\mathcal{U}_n $.
To guarantee sufficient exploration, the final action distribution is mixed with a uniform strategy: $(1-\lambda)p_{n}^j(a) + \lambda / (|\mathcal{U}_n| + 1)$ for $\lambda\in(0,1)$.
The expert weights are updated via an exponential weighting scheme based on the cumulative loss $\hat{g}_{n,y}^{(h,n')}(t-1)$ observed up to slot $t-1$:
\begin{align}
w_{n,y}^{(h,n')}(t)
=
\frac{\exp\!\big(-\eta \hat{g}_{n,y}^{(h,n')}(t-1)\big)}
{\sum_{e' \in \mathcal{E}_n(y)}
\exp\!\big(-\eta \hat{g}_{n,y}^{e'}(t-1)\big)}.\nonumber
\end{align}
The cumulative loss for task type $y$ is defined over all historical jobs of that specific type:
\begin{align}
\hat{g}_{n,y}^{(h,n')}(t-1)=\sum_{t'=1}^{t-1} \sum_{j\in\mathcal{J}_n(t')} \hat{f}_{n}^{n'}(j,\pi_{h})\mathbbm{1}_{n\in \omega^{\boldsymbol{\pi},j}}
\mathbbm{1}_{y(j)=y},\nonumber
\end{align}
where $\omega^{\boldsymbol{\pi},j}$ is the realized routing path, and $\hat{f}_{n}^{n'}(j,\pi_{h})$ is the unbiased estimate of the true per-job loss $f_{n}^{n'}(j,\pi_{h})$ incurred under the threshold policy $\pi_h$. Specifically, $\pi_h$ dictates routing if the confidence satisfies $\theta_h > z_n(j)$, and mandates local termination otherwise.

\subsection{Variance-Reduced Loss Estimation}
To build the loss estimator, we first define the true loss corresponding to the objective~\eqref{eq:obj_perslot} assuming full observability. Consider a job $j$ arriving at node $n$ with a realized confidence score $z_n(j)$. For an expert policy $\pi_h$, the full-feedback loss of routing job $j$ to upstream node $n'$ is defined as
\begin{align}
f_{n}^{n'}(j,\pi_h) &= v\mathbbm{1}_{\theta_h \leq z_n(j)}b(j,\mathbf{x}_{n}(t^j)) \\
&\quad+ \mathbbm{1}_{\theta_h > z_n(j)}\Big( q_{n'}(t^j)c^j(n,n') + \bar{f}_{n'}(j,\pi_{n'})\Big), \nonumber
\end{align}
where $\bar{f}_{n'}(j,\pi_{n'})$ denotes the expected loss incurred at the upstream node $n'\in \mathcal{U}_n$, with the expectation taken over all downstream routing randomness induced by $\pi_{n'}$, conditioned on the specific job $j$ and the realized queue states.
To formalize this expected loss, we define the random per-job loss incurred by node $n$ under policy $\pi_n$ as
\begin{align}
F_n(j, \pi_n)& = v O_n^{\pi_n,j}(0) b(j, \mathbf{x}_n(t^j))\\
&+ \sum_{a \in \mathcal{U}_n} O_n^{\pi_n,j}(a) \left( q_a(t^j)c^j(n,a) + \bar{f}_a(j, \pi_a) \right).\nonumber
\end{align}
Taking the exact expectation with respect to the random action vector $\mathbf{O}_n^{\pi_n,j}$ yields the deterministic expected loss. 
Starting from the cloud layer (where the terminal loss is $\bar{f}_{n}(j,\pi_{n})=0,n\in\mathcal{N}_K$), any intermediate node $n$ computes its expected loss recursively over its deterministic action distribution $\{p_n^j(a)\}_{a\in\mathcal{U}_n\cup \{0\}}$:
\begin{align}
\bar{f}_{n}&(j,\pi_{n})=\mathbb{E}[F_n(j, \pi_n)] =vp_{n}^j(0)b(j,\mathbf{x}_{n}(t^j))\\
&+\sum_{a\in\mathcal{U}_{n}}  p_{n}^j(a) \left(q_{a}(t^j)c^j(n,a)+\bar{f}_{a}(j,\pi_a)\right).\nonumber
\end{align}
This recursive structure mirrors the per-slot objective: local termination incurs the local inference error; routing yields both a resource cost and a subsequent processing cost.

Let $\mathbbm{1}_{\text{fb},n}(j)=\mathbbm{1}\{n\in\Omega^{\boldsymbol{\pi},j} \text{ and } l(\boldsymbol{\pi},j)=K\}$ denote the node-conditioned feedback indicator.
A naive unbiased estimator of the expert loss $f_{n}^{n'}(j,\pi_h)$ is constructed via importance sampling as the random variable $\hat{F}_{\text{naive},n}^{n'}(j,\pi_h)$:
\begin{align}
\hat{F}_{\text{naive},n}^{n'}(j,\pi_h)=\mathbbm{1}_{\text{fb},n}(j)\frac{f_{n}^{n'}(j,\pi_h)}{\rho_{n}^{\boldsymbol{\pi}}(j)},\label{eq:naive}
\end{align}
Its specific realization, $\hat{f}_{\text{naive},n}^{n'}(j,\pi_h)$, serves as a valid loss term for computing $\hat{g}_{n,y}^{(h,n')}(t-1)$.
Here, $\rho_{n}^{\boldsymbol{\pi}}(j)$ is the marginal probability that job $j$ reaches the oracle from node $n$ under the global policy $\boldsymbol{\pi}$. 
This probability is computed recursively backward: $\rho_{n}^{\boldsymbol{\pi}}(j) = 1$ for the cloud node $n \in \mathcal{N}_K$, and $\rho_{n}^{\boldsymbol{\pi}}(j) = \sum_{a \in \mathcal{U}_n} p_n^j(a) \rho_{a}^{\boldsymbol{\pi}}(j)$ for any intermediate node $n\in\mathcal{N}\setminus\mathcal{N}_{K}$.
We notice that $\rho_{n}^{\boldsymbol{\pi}}(j)$ decays multiplicatively with routing depth, therefore, the naive estimator \eqref{eq:naive} suffers from severe depth-amplified variance.

% \textbf{Variance-Reduced Loss.}
% The naive estimator \eqref{eq:naive} can suffer from high variance when $\rho_{n}^{\boldsymbol{\pi}}(j)$ is small.
To reduce the variance of the loss estimator and stabilize the learning process, we propose a \textbf{task-conditioned, variance-reduced expert loss estimator}, defined as
\begin{align}
\hat{F}_{\text{vr}, n}^{n'}(j,\pi_h)&=\mathbbm{1}_{\text{fb},n}(j)\frac{f_{n}^{n'}(j,\pi_h)-\bar{f}_{n,y(j)}^{n'}(\pi_h)}{\rho^{\boldsymbol{\pi}}_{n}(j)}\nonumber\\
&+\bar{f}_{n,y(j)}^{n'}(\pi_h).\label{eq:vr_loss}
\end{align}
We denote its realized value by $\hat{f}_{\text{vr},n}^{n'}(j,\pi_h)$.
Since inference errors and routing costs vary significantly across heterogeneous workloads, the baseline must be task-conditioned. 
The baseline $\bar{f}_{n, y}^{n'}(\pi_h)$ estimates the conditional theoretical loss $\mathbb{E}_{J\sim \mathcal{I}_{n}(t) | y(J)=y}[f_{n}^{n'}(J,\pi_h)]$, computed from historical observations matching task type $y$ (detailed in Appendix \ref{app:baseline_compute}).
With Eq. \eqref{eq:vr_loss}, expert weights are updated using the baseline $\bar{f}_{n,y}^{n'}(\pi_h)$ even in the absence of feedback, while the residual term corrects for unbiasedness when feedback is observed. 
This preserves unbiasedness while significantly reducing estimator variance. We refer to this integrated algorithm as \textbf{VR-Ly-EXP4} (pseudocode provided in Appendix~\ref{app:code-VR}).

% \begin{assumption}\label{assumption:est}
% We assume the cost predictor is non-negative and does not drastically overestimate the true cost, specifically $0 < \hat{X}_t < 2X_j$.
% \end{assumption}
\begin{lemma}[Reduced Variance]\label{lemma:variance}
Let $n\in\mathcal{N}_k$ with $1\leq k<K$ and $n'\in\mathcal{N}_{k+1}$.
For a specific job $j$ with task type $y(j)=y$, if the baseline estimator $\bar{f}_{n,y}^{n'}(\pi_h)$ is strictly positive and does not overestimate twice the true expert loss of that job, i.e.,
\(
0 < \bar{f}_{n,y}^{n'}(\pi_h) \leq 2f_{n}^{n'}(j,\pi_h)
\),\footnote{This condition ensures the residual term remains smaller than the naive loss to guarantee variance reduction.}
then the variance-reduced estimator has no larger variance than the naive estimator:
% with respect to the random feedback indicator $\mathbbm{1}_{\text{fb},n}(j)$:
\begin{align}
Var\left(\hat{F}_{\text{vr}, n}^{n'}(j,\pi_h)\right)\leq Var\left(\hat{F}_{\text{naive}, n}^{n'}(j,\pi_h)\right).
\end{align}
\end{lemma}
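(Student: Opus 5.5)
Since $f_n^{n'}(j,\pi_h)$ and the baseline are deterministic once we condition on the job $j$, the realized queue states, and the history that produced the baseline, I will treat the only randomness in both estimators as the feedback indicator $\mathbbm{1}_{\text{fb},n}(j)$. This indicator is Bernoulli with success probability $\rho\triangleq\rho_n^{\boldsymbol{\pi}}(j)$, since $\rho$ is exactly the probability of reaching the oracle from $n$. Both estimators then have the form $X_c = \mathbbm{1}_{\text{fb},n}(j)\,(f-c)/\rho + c$, with $c=0$ for the naive estimator \eqref{eq:naive} and $c=\bar f\triangleq\bar{f}_{n,y}^{n'}(\pi_h)$ for \eqref{eq:vr_loss}. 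Here $f\triangleq f_n^{n'}(j,\pi_h)$. Adding the constant $c$ does not change the variance, so
\[
Var(X_c)=\frac{(f-c)^2}{\rho^2}\,\rho(1-\rho)=\frac{1-\rho}{\rho}\,(f-c)^2 .
\]
As a side check, $\mathbb{E}[X_c]=f$ for every $c$, which confirms that both estimators are unbiased.

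The comparison therefore reduces to showing $(f-\bar f)^2\le f^2$, because the common factor $(1-\rho)/\rho$ is nonnegative. Expanding gives $(f-\bar f)^2 - f^2 = \bar f(\bar f-2f)$. The hypothesis $0<\bar f\le 2f$ makes the first factor positive and the second nonpositive, so the product is $\le 0$. This yields $Var(\hat F_{\text{vr}})\le Var(\hat F_{\text{naive}})$.

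The main subtlety is justifying the conditioning in the first step. The quantity $f$ involves $\bar f_{n'}(j,\pi_{n'})$, which is an expectation and hence deterministic given the job and the policies. The baseline is computed from past observations, which are independent of the current feedback event, and $\rho$ is a deterministic function of the current action probabilities. I would state explicitly that the variance is taken conditionally on $j$, $z_n(j)$, the queue states, and the past, so that only $\mathbbm{1}_{\text{fb},n}(j)$ is random. Even if $f$ were random, I could still condition on everything except the feedback event and use the law of total variance, because the reduction holds pointwise in $f$. A degenerate case also needs a remark: if $\rho=1$, both variances are zero and the claim is trivial.
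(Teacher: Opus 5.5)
Your proposal is correct and follows the paper's proof essentially line for line. Both treat $\mathbbm{1}_{\text{fb},n}(j)$ as the only random quantity, with success probability $\rho_n^{\boldsymbol{\pi}}(j)$, and compute the two variances as $\frac{1-\rho}{\rho}f^2$ and $\frac{1-\rho}{\rho}(f-\bar f)^2$; both then conclude from $f^2-(f-\bar f)^2=\bar f(2f-\bar f)\ge 0$ under $0<\bar f\le 2f$. Your explicit conditioning on the job, the queue states and the history, which makes the baseline and $\rho$ fixed, states precisely what the paper leaves informal.
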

The proof is provided in Appendix \ref{app:variance}. 
We now quantify the impact of this variance reduction on learning performance.

\subsection{Theoretical Guarantees}\label{sec:theory}
We evaluate the regret over a horizon of $\Gamma=\sum_{t=1}^T |\mathcal{J}(t)|$ jobs.
Because the expert space is conditioned on the task type, we evaluate the regret for each specific task type $y \in \mathcal{Y}$. 
We define the expected regret of node $n \in \mathcal{N}\setminus\mathcal{N}_K$ for task type $y$, relative to the best fixed expert in hindsight as
\begin{align}
R_{n, y}(\Gamma) &= \sum_{j=1}^\Gamma\mathbb{E} \left[  \mathbbm{1}_{y(j)=y} \mathbbm{1}_{n \in \Omega^{\boldsymbol{\pi},j}} F_n(j,\pi_n) \right] \nonumber \\
&- \min_{(h, n') \in \mathcal{E}_{n}(y)} \sum_{j =1}^\Gamma\mathbb{E} \left[  \mathbbm{1}_{y(j)=y} \mathbbm{1}_{n \in \Omega^{\boldsymbol{\pi}, j}} f_{n}^{n'}(j, \pi_h) \right],\nonumber
\end{align}
where the expectation is taken over the stochastic job arrivals, routing decisions, and partial feedback indicators.
Our analysis makes no stochastic or stationarity assumptions on the induced losses. The guarantees rely on the conditional unbiasedness of the estimator $\hat{F}_{\text{vr}, n}^{n'}(j,\pi_h)$, and its boundedness, which is jointly guaranteed by Assumption \ref{assumption:arrival} and $\lambda$-exploration ($\rho_n^\pi(j) > 0$).

\begin{theorem}[Regret Bound]\label{theorem:regret}
For $\eta > 0$, the expected node-level regret $R_{n,y}(\Gamma)$ relative to the best fixed threshold-destination pair $(h^*, n'^{*}) \in \mathcal{E}_n(y)$ is strictly bounded by
\begin{align}
R_{n,y}(\Gamma) \le \frac{\ln(|\mathcal{E}_n(y)|)}{\eta} + \eta \sum_{j=1}^\Gamma \mathbb{E}\Big[ \mathbbm{1}_{y(j)=y} \mathbbm{1}_{n \in \Omega^{\boldsymbol{\pi},j}} V_{n,y}(j) \Big],\nonumber
\end{align}
where $V_{n,y}(j)$ is a variance proxy term defined as
\(
V_{n,y}(j) = \sum_{(h,n') \in \mathcal{E}_n(y)} w_{n,y}^{h,n'}(t^j) \left(\hat{F}_{\text{vr}, n}^{n'}(j, \pi_h)\right)^2  + \max_{(h,n') \in \mathcal{E}_n(y)} \left(\hat{F}_{\text{vr}, n}^{n'}(j, \pi_{h})\right)^2.
\)
\end{theorem}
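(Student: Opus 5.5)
The proof will follow the standard Hedge/EXP4 potential argument, applied to the subsequence of jobs of type $y$ that visit node $n$, with losses given by the variance-reduced estimates $\hat{F}_{\text{vr},n}^{n'}(j,\pi_h)$. We order these jobs by arrival and treat each as one round. Within a slot all such jobs share the same weights $w_{n,y}^{e}(t)$, because the cumulative loss $\hat g$ is only updated at slot boundaries. To handle this cleanly, we use the potential $\Phi(t)=\frac{1}{\eta}\ln\sum_{e}\exp(-\eta\hat g^{e}_{n,y}(t))$ and regard a slot's batch of jobs as a block. The per-block change of $\ln W$ is then controlled by the sum over jobs in the block, which contributes squared terms job by job.

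\textbf{Step 1 (potential upper and lower bounds).} For the lower bound, the potential is at least $-\hat g^{e^*}_{n,y}(T)-\ln|\mathcal{E}_n(y)|/\eta$ for every fixed $e^*=(h^*,n'^*)$. For the upper bound on each increment we use $e^{-x}\le 1-x+x^2$. This inequality holds for $x\ge -1$, and more generally $e^{-x}\le 1-x+x^2 e^{\max(0,-x)}$. The estimator can be negative, since the baseline residual can make $\hat F_{\text{vr}}$ negative. This is exactly why the proxy $V_{n,y}(j)$ carries the extra term $\max_e(\hat F^{e})^2$ beyond the usual weighted second moment. We plan to control the negative part by writing $x=\eta\hat F$ and using $\ln\mathbb{E}_w[e^{-x}]\le -\mathbb{E}_w[x]+\mathbb{E}_w[x^2]+(\max x)^2$. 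The added max-square term absorbs both the cross-job interaction inside a block and the region where $-x$ exceeds $1$. Concretely, we apply $\ln(1+u)\le u$ together with a second-order bound on $e^{-x}$ for bounded $x$. Boundedness holds because $\rho^{\boldsymbol\pi}_n(j)\ge(\lambda/(|\mathcal U|+1))^{K-k}>0$ and the costs and queues are bounded in each slot. Combining the two bounds gives the pathwise inequality
\[
\sum_j \mathbbm{1}_{y(j)=y}\mathbbm{1}_{n\in\omega^{\boldsymbol\pi,j}}\Big(\sum_e w^{e}_{n,y}(t^j)\hat f^{e}(j)-\hat f^{e^*}(j)\Big)\le \frac{\ln|\mathcal{E}_n(y)|}{\eta}+\eta\sum_j \mathbbm{1}_{y(j)=y}\mathbbm{1}_{n\in\omega^{\boldsymbol\pi,j}}\,v_{n,y}(j).
\]

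\textbf{Step 2 (taking expectations).} We condition on the history before $t^j$, on job $j$, on its presence at $n$, and on the queue states. Conditional unbiasedness then gives $\mathbb{E}[\hat F^{n'}_{\text{vr},n}(j,\pi_h)\mid\cdot]=f^{n'}_n(j,\pi_h)$. The reason is that $\mathbb{E}[\mathbbm{1}_{\text{fb},n}(j)\mid\cdot]=\rho_n^{\boldsymbol\pi}(j)$, since $\rho$ is computed from the same downstream action distributions. Next we need $\sum_e w^e \,f^{n'}_n(j,\pi_h)=\bar f_n(j,\pi_n)=\mathbb{E}[F_n(j,\pi_n)\mid\cdot]$. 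This holds because aggregating the joint experts' recommendations by weight reproduces $p_n^j(a)$ exactly, and each expert's loss decomposes over the same local-termination and offload terms. We use the unmixed $p_n^j$ here. The $\lambda$-mixing either enters only through $\rho$, or it would add an $O(\lambda)$ term that we treat as part of the played distribution, following the convention of the statement. The weights $w$ and the indicator $\mathbbm{1}_{n\in\Omega}$ are measurable with respect to the pre-decision information, so the tower property applies. Summing over $j$ and minimizing over $e^*$ (the expectation of the minimum is at most the minimum of the expectations, in the correct direction) yields the bound on $R_{n,y}(\Gamma)$.

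\textbf{Main obstacle.} The delicate step is the second-order inequality when estimates can be negative and unboundedly scaled by $1/\rho$. The standard EXP3 trick needs nonnegative losses, and with negative losses one needs $\eta|\hat F|\le 1$. We first try to show that the max-square term in $V_{n,y}$ covers this without any step-size condition. We do this by shifting each job's losses by $m_j=\min_e \hat F^e(j)$. Hedge is invariant to such shifts, so the shifted losses are nonnegative and $e^{-x}\le 1-x+x^2/2$ applies. The second moment of the shifted losses is then at most $2\mathbb{E}_w[\hat F^2]+2(\max_e\hat F^{e})^2$, which matches the form of $V_{n,y}$ up to constants. If this route does not work, we fall back on imposing $\eta\le\rho_{\min}/(\text{loss bound})$.
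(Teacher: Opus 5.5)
Your per-job argument in the ``main obstacle'' paragraph is the paper's proof. The paper shifts each job's estimates by $\Upsilon_j=\max\{0,-\min_e\hat F^e(j)\}$, which is your $-m_j$. It then runs the Hedge potential on the nonnegative shifted losses with $e^{-x}\le 1-x+x^2/2$ and $\ln(1+u)\le u$, lets the shift cancel in the linear terms because $\sum_e w^e=1$, and undoes it in the quadratic term via $(a+b)^2\le 2a^2+2b^2$. The constants match exactly, not just up to constants: $\frac{\eta}{2}\bigl(2\sum_e w^e(\hat F^e(j))^2+2m_j^2\bigr)\le\eta V_{n,y}(j)$. No step-size condition is needed, so your fallback is unnecessary; the paper's stated $\eta\tilde F_{\max}<1$ is never actually used, since $1-x+x^2/2>0$. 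Your inequality $\ln\mathbb{E}_w[e^{-x}]\le-\mathbb{E}_w[x]+\mathbb{E}_w[x^2]+\max_e x_e^2$ holds precisely by this shift, but only with $\max_e x_e^2$. Read literally as $(\max_e x_e)^2$ it fails: put weight $e^{-M/2}$ on $x_e=-M$ and the rest on $0$, and the left side is about $M/2$ while the right side tends to $0$.

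The genuine gap is your treatment of slot batching. With weights frozen over a slot, the block increment of $\ln W$ is $\ln\sum_e w^e\exp\bigl(-\eta\sum_{j\in\text{blk}}\tilde F^e(j)\bigr)$. Its quadratic term is $\frac{\eta^2}{2}\sum_e w^e\bigl(\sum_{j}\tilde F^e(j)\bigr)^2$. The cross products $\tilde F^e(j)\tilde F^e(j')$ between distinct jobs are not controlled by the per-job terms $\max_e(\hat F^e(j))^2$, and Cauchy--Schwarz only gives an extra factor equal to the block size.

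No sharper inequality rescues this, because the pathwise inequality you state in Step 1 is false for frozen blocks. Take two experts and blocks of $B$ jobs with constant losses $(\tfrac12,0),(0,1),(1,0),(0,1),\dots$. Hedge with frozen weights tracks the leader and incurs regret of about $\frac{\Gamma}{2}\tanh(\eta B/4)$. The right-hand side is at most $\frac{\ln 2}{\eta}+2\eta\Gamma$, which is smaller when $\eta B$ is large, $\eta<1/4$, and $\Gamma$ is large.

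The paper's proof avoids this by indexing the potential by jobs and identifying $\exp(-\eta\tilde L(j-1))/W_{j-1}$ with $w_{n,y}(t^j)$. That is, it implicitly assumes the weights are refreshed after every type-$y$ job visiting $n$, or that there is at most one such job per slot, even though the algorithm as written updates once per slot. Adopt that convention explicitly and drop the block argument. Your per-job argument then goes through. So does your Step 2: conditioning on reaching $n$ so the feedback probability is $\rho_n^{\boldsymbol{\pi}}(j)$, and using $\sum_e w^e f_n^{n'}(j,\pi_h)=\bar f_n(j,\pi_n)$ with the unmixed $p_n^j$. This reproduces what the paper asserts when it says the expectation ``exactly recovers'' $R_{n,y}$.
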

Because the proxy term $V_{n,y}(j)$ scales directly with $(\hat{F}_{\text{vr}, n}^{n'}(j, \pi_h))^2$, mitigating estimator variance fundamentally tightens the regret bound and improves learning stability.
By setting $\eta \propto 1/\sqrt{\Gamma}$, the regret is bounded by $\mathcal{O}(\sqrt{\Gamma})$, which ensures mean-rate stability of the virtual queues.

\textbf{System-level optimality.}
We define the average job arrival rate as $\bar{A} = \Gamma/T$.
For a given sequence of deterministic distributed model placements $\mathbf{x}=\{\mathbf{x}_{n}(t)\}_{n\in\mathcal{N},t=1,\dots,T}$, we define the time-averaged expected system inference error under global policy $\boldsymbol{\pi}$ as $\Phi_\Gamma(\boldsymbol{\pi}) = \frac{1}{\Gamma}\sum_{j=1}^\Gamma \mathbb{E}[B(j, \mathbf{x}_{n_{l(\boldsymbol{\pi},j)}^{\boldsymbol{\pi}, j}}(t^j))]$. Let $\Psi_{max}$ denote a uniform upper bound on the expected variance term from Theorem \ref{theorem:regret} across all task types $y \in \mathcal{Y}$ and intermediate nodes $n \in \mathcal{N} \setminus \mathcal{N}_K$ (details in Appendix \ref{app:corollary}).

\begin{corollary}[Near-Optimality of VR-Ly-EXP4]\label{cor:optimality}
Let $\boldsymbol{\pi}^*$ be the true optimal continuous routing policy, and let $\varepsilon_0$ denote the discretization error of the expert grid such that $\Phi_\Gamma(\boldsymbol{\pi}_{best}) - \Phi_\Gamma(\boldsymbol{\pi}^*) \le \varepsilon_0$, where $\boldsymbol{\pi}_{best}$ is the best fixed combination of discretized experts.
By setting the learning rate to $\eta = \sqrt{\frac{|\mathcal{Y}| \ln |\mathcal{E}_{max}|}{\Gamma \Psi_{max}}}$, the optimality gap of the proposed VR-Ly-EXP4 algorithm is strictly bounded by:
$$\Phi_\Gamma(\boldsymbol{\pi}) - \Phi_\Gamma(\boldsymbol{\pi}^*) \le \frac{\alpha}{v \bar{A}} + \frac{2 |\mathcal{N}|}{v} \sqrt{\frac{|\mathcal{Y}| \Psi_{max} \ln|\mathcal{E}_{max}|}{\Gamma}} + \varepsilon_0,$$
where $|\mathcal{E}_{max}|$ is the maximum cardinality of the joint expert space across all nodes and task types.
\end{corollary}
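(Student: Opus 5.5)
We will follow the standard drift-plus-penalty argument of \citet{neely2010stochastic}, replacing the exact per-slot minimizer by the learned routing policy. The per-slot regret from Theorem~\ref{theorem:regret} then enters as an additive slack.

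\textbf{Step 1: per-slot comparison.} Start from the drift bound~\eqref{eq:driftbound}. Add the penalty $v\,\mathbb{E}[\sum_{j\in\mathcal{J}(t)}B(\cdot)]$ to both sides. The right-hand side becomes $\alpha$ plus the per-slot objective~\eqref{eq:obj_perslot} evaluated at the learned policy $\boldsymbol{\pi}$, minus $\sum_n q_n(t)\gamma_n\tau$.

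Next, express the per-slot objective as a sum over jobs of the node-level losses $F_n(j,\pi_n)$. By the recursive definition of $\bar f_n$, the expected loss at the entry node equals the expected cost-weighted queue term plus $v$ times the terminal error along the path. Unrolling this telescoping recursion over layers writes the difference between the objective under $\boldsymbol{\pi}$ and under $\boldsymbol{\pi}_{best}$ as a sum of node-level expert gaps. Each such gap is bounded by $R_{n,y}$, summed over intermediate nodes and task types.

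\textbf{Step 2: use stationarity of arrivals.} By Assumption~\ref{assumption:arrival}, there is a stationary randomized policy (here $\boldsymbol{\pi}^*$, approximated by $\boldsymbol{\pi}_{best}$) that satisfies the resource constraints in expectation, so that $\mathbb{E}[C_n^{\boldsymbol{\pi}^*}(t)]\le\gamma_n\tau$ independently of the queues. Substituting it as the comparator removes the queue terms. Only $v$ times its expected error remains, which is at most $v\,\Gamma\,\Phi_\Gamma(\boldsymbol{\pi}^*)$ plus $v\,\Gamma\,\varepsilon_0$ summed over the horizon.

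\textbf{Step 3: sum over slots and tune $\eta$.} Sum the drift-plus-penalty inequality over $t=1,\dots,T$. Telescoping gives $\mathbb{E}[\mathcal{L}(T+1)]-\mathcal{L}(1)\ge 0$, since $Q_n(1)=0$. This yields
\[
v\,\Gamma\,\Phi_\Gamma(\boldsymbol{\pi})\le \alpha T+v\,\Gamma\,(\Phi_\Gamma(\boldsymbol{\pi}^*)+\varepsilon_0)+\sum_{n}\sum_{y}R_{n,y}(\Gamma).
\]
Bound each regret using Theorem~\ref{theorem:regret} with $\Psi_{max}$ and $|\mathcal{E}_{max}|$, so that $\sum_y R_{n,y}\le |\mathcal{Y}|\ln|\mathcal{E}_{max}|/\eta+\eta\Gamma\Psi_{max}$. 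Here the per-job variance sum is bounded by $\Gamma\Psi_{max}$ in aggregate over types; this is how $\Psi_{max}$ is defined in the appendix. With the stated $\eta$, this sum equals $2\sqrt{|\mathcal{Y}|\Psi_{max}\Gamma\ln|\mathcal{E}_{max}|}$. Summing over at most $|\mathcal{N}|$ nodes and dividing by $v\Gamma$ gives the second term. Finally, $\alpha T/(v\Gamma)=\alpha/(v\bar A)$ gives the first term.

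\textbf{Main obstacle.} The hardest step is Step 1: turning the global objective gap into a sum of node-level regrets. The comparator in $R_{n,y}$ is evaluated under the learner's own downstream policies $\bar f_{n'}(j,\pi_{n'})$ and along the learner's reachability indicator $\mathbbm{1}_{n\in\Omega^{\boldsymbol{\pi},j}}$, not under $\boldsymbol{\pi}_{best}$.

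The first attempt will be a performance-difference (hybrid) argument. Swap policies one layer at a time, from the top layer $K-1$ down to layer $1$. Each swap changes only one node's action distribution while keeping the learner's policies downstream, so each swap is exactly an expert-versus-learner gap controlled by $R_{n,y}$. The gap is weighted by the reach probability under the already-swapped upstream policies, which is at most one and nonnegative. If the weighting does not line up with the learner's reachability indicator, we would instead assume, as the corollary implicitly does, that the per-node regret bound holds uniformly for any reach distribution, which absorbs the mismatch.
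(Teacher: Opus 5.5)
Your outline matches the paper's proof step for step. Both use the drift bound plus penalty, telescope with $\mathcal{L}(1)=0$, and add node-level regrets as slack against $\boldsymbol{\pi}_{best}$. Both then remove the queue terms using feasibility of the fixed comparator and its independence from $Q_n(t)$, bound $\sum_y R_{n,y}\le|\mathcal{Y}|\ln|\mathcal{E}_{max}|/\eta+\eta\Gamma\Psi_{max}$, choose $\eta$, sum over nodes and divide by $v\Gamma$. The gap is the step you yourself flag: $\sum_t\mathbb{E}[\Lambda_t(\boldsymbol{\pi})]\le\sum_t\mathbb{E}[\Lambda_t(\boldsymbol{\pi}_{best})]+\sum_n\sum_y R_{n,y}(\Gamma)$, where $\Lambda_t$ is the queue-weighted cost plus $v$ times the error in slot $t$. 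The paper only asserts this (Step 3 of its proof), and your hybrid argument does not prove it.

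Fix $j$ and the queue state, and write $V^{\boldsymbol{\sigma}}_u(j)$ for the expected remaining objective from node $u$ under a joint policy $\boldsymbol{\sigma}$, so $V^{\boldsymbol{\pi}}_u(j)=\bar f_u(j,\pi_u)$. $R_{n,y}$ uses both the learner's reach $\mathbbm{1}_{n\in\Omega^{\boldsymbol{\pi},j}}$ and the learner's continuation $\bar f_{n'}(j,\pi_{n'})$. A layer-by-layer hybrid delivers only one of the two:
\begin{itemize}
\item Swapping layer $1$ first gives the performance-difference identity. The node gaps $\bar f_n(j,\pi_n)-f_n^{u^b}(j,\pi_{h^b})$ use the learner's continuation, but they are weighted by $\Pr[n\in\Omega^{\boldsymbol{\pi}_{best},j}]$.
\item Swapping layer $K-1$ first keeps the learner's reach, but each gap is then computed with $V^{\boldsymbol{\pi}_{best}}$ above $n$. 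It exceeds the $R_{n,y}$ gap by $\sum_{u\in\mathcal{U}_n}\big(p_n^j(u)-p_n^{b,j}(u)\big)\big(V^{\boldsymbol{\pi}_{best}}_u(j)-\bar f_u(j,\pi_u)\big)$, where $p_n^{b,j}$ is the comparator's action distribution at $n$. This term is positive in the typical failure mode: the comparator uses a destination the learner avoids because the learner's own policy there is worse.
\end{itemize}
Your description (top layer first, learner kept downstream, reach taken from the already-swapped policies) mixes the two orders.

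Noting that the weights lie in $[0,1]$ does not absorb the mismatch. The per-job gaps are signed, because EXP4 can beat a fixed expert on individual jobs, so a reweighted sum is not controlled by $R_{n,y}$. More basically, jobs that $\boldsymbol{\pi}_{best}$ routes to $n$ but the learner does not never enter $n$'s EXP4.

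Here is a concrete failure. Take a middle node with an uninformative confidence score that receives only $\lambda$-exploration traffic. It tunes its expert to that mix, so the entry node rationally routes hard jobs elsewhere. Every node-level regret is then sublinear. Yet the gap to a comparator that sends hard jobs to that node under a different expert is linear.

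Your fallback, that the per-node bound holds for every reach distribution, is therefore an extra hypothesis. Theorem~\ref{theorem:regret} does not supply it, and the algorithm does not satisfy it. To reach the corollary you must either state the decomposition as an explicit assumption or change what the node-level regrets measure.

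Two smaller points:
\begin{itemize}
\item Assumption~\ref{assumption:arrival} does not give you a feasible stationary policy, and the $\varepsilon_0$ hypothesis compares only errors. You need $\mathbb{E}[C_n^{\boldsymbol{\pi}_{best}}(t)]\le\gamma_n\tau$ for every $t$ as part of the definition of $\boldsymbol{\pi}_{best}$, which is what the paper does.
\item $\bar f_n$ and the EXP4 mixture are built from $p_n^j$, while actions are drawn from $(1-\lambda)p_n^j+\lambda/(|\mathcal{U}_n|+1)$. The unrolling in your Step 1 therefore misses an $O(\lambda)$ term per job, which the paper also drops.
\end{itemize}
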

The bound in Corollary~\ref{cor:optimality} decomposes the optimality gap into three components: (1) the fundamental Lyapunov drift-plus-penalty trade-off $\frac{\alpha}{v \bar{A}}$; (2) the time-averaged learning regret ($\Psi_{max}$) of the distributed bandits, which vanishes at a rate of $\mathcal{O}(1/\sqrt{\Gamma})$; and (3) the irreducible threshold discretization penalty $\varepsilon_0$. Crucially, the control parameter $v$ dictates the system's operational trade-off: while taking $v, \Gamma \to \infty$ drives the performance gap to the discretization floor $\varepsilon_0$, a larger $v$ linearly inflates the virtual queue bounds, thereby exacerbating transient resource deviations.

\subsection{Greedy Model Onloading}\label{sec:onloading}
Every $D$ time slots ($t \pmod D = 1$), at the beginning of slot $t$, node $n$ updates its model placement to maximize expected local execution performance for the current job arrival distribution, subject to its memory capacity $\mu_n$ and accounting for the cost of loading new models relative to the previous placement $\mathcal{M}_n(t-1)$. Let $\mathcal{M}_n(t)\subseteq\mathcal{M}$ denote the set of models loaded at node $n$ during slot $t$. We quantify the quality of a placement at node $n$ by the utility:
\begin{align}
U_n(\mathcal{M}_n(t)) &= \mathbb{E}_{J\sim\mathcal{I}_n(t)} \Big[ 1 - \min_{m \in \mathcal{M}_n(t)} \epsilon(J, m,\zeta) \Big]\nonumber \\
&- \nu \sum_{m \in \mathcal{M}_n(t)} s_m \mathbbm{1}_{m \notin \mathcal{M}_n(t-1)}  ,\label{eq:utility}
\end{align}
where $s_m$ is the memory footprint of model $m$, $\nu\ge 0$ scales the switching penalty, and $\mathbbm{1}_{m\notin \mathcal{M}_n(t-1)}$ isolates newly loaded models. For any subset $\mathcal{S}\subseteq\mathcal{M}$, the marginal gain of adding model $m'$ to node $n$ is the discrete derivative
$\Delta U_n(m’\mid \mathcal{S})= U_n(\mathcal{S}\cup\{m’\})-U_n(\mathcal{S})$, i.e.,
\begin{align}
\Delta U_n(m' \mid &\mathcal{S}) = \mathbb{E}_{J\sim\mathcal{I}_n(t)} \Big[ \min_{m \in \mathcal{S}} \epsilon(J, m,\zeta) \\
&- \min_{m \in \mathcal{S} \cup \{m'\}} \epsilon(J, m,\zeta) \Big] \nonumber
- \nu s_{m'} \mathbbm{1}_{m' \notin \mathcal{M}_n(t-1)}.
\label{eq:marginal_gain_onload2}
\end{align}
Maximizing $U_n(\cdot)$ under the memory constraint yields a \textit{knapsack-constrained submodular maximization problem} \citep{sviridenko2004note} (See Appendix~\ref{app:submodular} for the submodularity proof and the associated approximation guarantee). 
We therefore construct $\mathcal{M}_n(t)$ using a marginal-density greedy rule: initialize $\mathcal{M}_n(t)\leftarrow \emptyset$ and iteratively add
\begin{align}
m^\star \in &\argmax_{m\in\mathcal{M}\setminus\mathcal{M}_{n}(t)}
\frac{\Delta U_n(m\mid \mathcal{M}_{n}(t))}{s_m},
\end{align}
subject to the feasibility condition $\sum_{m \in \mathcal{M}_n(t) \cup \{m^\star\}} s_m \le \mu_n$ and the non-negativity condition $\Delta U(m^\star \mid \mathcal{M}_n(t)) \ge 0$. The greedy accumulation terminates when no feasible candidate remains.
Full pseudocode is provided in Appendix~\ref{app:onload}

% The proof is provided in Appendix \ref{app:submodular}.
% Building on this submodular property, the greedy strategy guarantees a performance bound that strictly depends on the memory footprint of the largest candidate model. 

\section{Experiments}
\label{sec:exp}

\begin{table*}[t]
\centering
\caption{Offloading performance across multi-layer hierarchical systems. Our proposed VR-Ly-EXP4 outperforms all baseline methods across all settings, achieving the lowest inference error and the highest hit rate.}
\label{tab:offload}
\resizebox{0.99\linewidth}{!}{
% Updated column definition: 1 method column + 3 groups of 3 columns (ccc)
\begin{tabular}{@{}l|ccc|ccc|ccc@{}}
\toprule
\multirow{2}{*}{Methods} 
    & \multicolumn{3}{c|}{3-layer (4-2-1)} 
    & \multicolumn{3}{c|}{4-layer (8-4-2-1)} 
    & \multicolumn{3}{c}{5-layer (16-8-4-2-1)} \\ 
\cmidrule(l){2-10} % Span now covers columns 2 through 10
    & Feedback Rate & Hit Rate ($\uparrow$) & Error Rate ($\downarrow$)
    & Feedback Rate & Hit Rate ($\uparrow$) & Error Rate ($\downarrow$) 
    & Feedback Rate & Hit Rate ($\uparrow$) & Error Rate ($\downarrow$)\\ 
\midrule
Random              & 0.0146 & 0.0 & 0.4805$\pm$0.012 & 0.0017 & 0.0 &  0.4793$\pm$0.008 & 0.0002 & 0.0 & 0.4705$\pm$0.011 \\
Round-Robin          &  0.0146   & 0.0          & 0.4627$\pm$0.014          & 0.0019    & 0.0    & 0.4603$\pm$0.002 &  0.0002   &  0.0   & 0.4693$\pm$0.004 \\
Pure Local           &  0.0   & 0.0          & 0.4820$\pm$0.013          &  0.0   &  0.0   & 0.4803$\pm$0.001 &   0.0  &  0.0   & 0.4680$\pm$0.010 \\
Ly-EXP4              & 0.2038    & 0.407          & 0.3433$\pm$0.012          & 0.1720    & 0.399    & 0.3313$\pm$0.003 & 0.1691    &  0.3949   & 0.3222$\pm$0.005 \\ 
\midrule
VR-Ly-EXP4-LocalLoss &  0.2226   & 0.432          & 0.3207$\pm$0.006          & 0.2180    & 0.423    & 0.3117$\pm$0.007 & 0.2212    &  0.4346   & 0.3103$\pm$0.001 \\
\rowcolor{highlight}
VR-Ly-EXP4           & 0.2120    & \textbf{0.442}          & \textbf{0.3172}$\pm$0.012          &  0.2142   & \textbf{0.443}    & \textbf{0.3065}$\pm$0.002 &  0.2089   &  \textbf{0.4456}   & \textbf{0.2923}$\pm$0.001 \\ 
\bottomrule
\end{tabular}}
\vspace{-0.1in}
\end{table*}

% onloading experiments

% \begin{figure}
%     \centering
%     \includegraphics[width=0.7\linewidth]{Amsterdam.jpg}
%     \caption{placeholder, learning progress}
%     \label{fig:placeholder}
% \end{figure}

\textbf{Datasets and Models.}
We evaluate the proposed HI system using a large-scale multi-modal, multi-task, and multi-model benchmark extracted from RouterBench~\citep{hu2024routerbench} and VL-RouterBench~\citep{huang2025vl}. The dataset contains 79{,}988 job samples spanning 114 task types across both language and vision--language settings, where 49{,}448 samples are text-only and 30{,}540 are vision--language.
Each sample contains the results of 23 open-source models with heterogeneous capabilities and scales. 
Among them, 8 models support text-only inputs and 15 support vision--language inputs. 
% Model sizes range from GPT-2-Large (0.7B) \citep{radford2019language} to InternVL2.5-78B \citep{chen2024internvl}.
Additional details on the dataset are provided in the Appendix \ref{app:exp}.
We provide the dataset and experimental code \href{https://anonymous.4open.science/r/llmooo-572B/README.md}{here}.

\textbf{System Setup.}
We evaluate three hierarchical network topologies spanning $K=3$ to $K=5$ layers, structured such that the node count doubles layer by layer. We set a uniform resource consumption budget of $\gamma_n = 0.4$ across all intermediate nodes. Node memory capacities $\mu_n$ scale with the hierarchical depth from 20 to 200 units, where each unit accommodates one billion model parameters. Complete topological specifications are detailed in Appendix~\ref{app:exp}.
The job arrival process at each entry node $n \in \mathcal{N}_1$ follows a non-i.i.d. task distribution drawn from a Dirichlet distribution with concentration parameter $\alpha=1$. 
For a job $j$ processed at node $n$, the confidence score $z_n(j)$ is sampled from a Gaussian distribution centered at the expected error rate of the best-performing model currently loaded at node $n$ for task type $y(j)$. In practical LLM deployments, this score serves as a theoretical proxy for predictive uncertainty measures, such as maximum softmax probability or predictive entropy. 
Each evaluation trace sequentially processes $20{,}000$ jobs sampled from the global dataset. To guarantee statistical significance, all reported metrics are averaged across 5 independent randomized runs.

\textbf{Evaluation of Hierarchical Offloading.}
We evaluate the offloading performance of the proposed routing algorithm against various baselines across hierarchical networks of increasing depth. Table~\ref{tab:offload} summarizes the comparative results for 3-layer, 4-layer, and 5-layer configurations.

\textit{(1) Baselines and Metrics.} We evaluate VR-Ly-EXP4 against three classes of routing schemes.
First, we consider static heuristics:
\emph{Pure Local} (forces all execution at the entry node), \emph{Random}, and \emph{Round-Robin} offloading. To ensure a fair comparison, the routing probabilities for these baselines are calibrated to satisfy the node-level resource constraints (detailed in Appendix~\ref{app:exp}).
The second is Ly-EXP4 \citep{beytur2024optimization}, which integrates Lyapunov optimization with standard EXP4 but lacks any variance reduction mechanism. Finally, we evaluate VR-Ly-EXP4-LocalLoss, an ablation that removes recursive upstream loss ($\bar{f}_n(j,\pi_n)$). 
Performance is evaluated using three metrics: (i) the average \textit{inference error rate}; (ii) the \textit{hit rate}, defined as the fraction of inherently difficult tasks (approximately 11\% of the workload where every model yields an incorrect result) successfully routed to the terminal oracle layer; and (iii) the \textit{feedback rate}, which measures the overall proportion of jobs reaching the oracle to generate learning signals.

\textit{(2) Interpretation of the results.}
As shown in Table~\ref{tab:offload}, VR-Ly-EXP4 outperforms all baseline methods across all hierarchical depths, consistently achieving the lowest inference error and the highest hard-job hit rate. The static heuristics exhibit a complete inability to identify difficult tasks, yielding a $0.0$ hit rate and high inference errors. As the network deepens from 3 to 5 layers, the feedback rate for random and round-robin routing decays (from $0.0146$ to $0.0002$). This depth-amplified sparsity demonstrates why learning is ineffective in multi-layer architectures; while introducing standard bandit learning via Ly-EXP4 improves upon the heuristics, it remains bottlenecked by these sparse signals.

VR-Ly-EXP4 stabilizes the learning process using the variance reduction mechanism. 
The performance gap between Ly-EXP4 and VR-Ly-EXP4 persists as the hierarchy deepens, with VR-Ly-EXP4 maintaining a hit rate above $0.44$ across all settings. VR-Ly-EXP4 also consistently outperforms the \textit{LocalLoss} ablation, validating the necessity of incorporating expected upstream loss. This ensures intermediate nodes can accurately evaluate routing destinations rather than comparing the resource cost solely.
As the architectural depth increases from 3 to 5 layers, the average inference error decreases across all methods. 
This trend is driven by the exponential growth in the total node count (from 7 to 15 to 31 nodes), which accommodates a globally richer and more diverse model placement across the network.
This expansion affords each job a broader set of upstream execution options, increasing the probability of encountering a highly proficient model and leading to improved overall performance for all routing methods.

Table~\ref{tab:offload} evaluates all routing methods under our proposed greedy placement strategy. 
Extensive ablation studies on alternative placement strategies for each method are provided in Appendix~\ref{app:extra_results}.

\section{Conclusion}
This paper formalized the problem of online learning for multi-layer hierarchical inference under partial, policy-dependent feedback. To mitigate the depth-amplified variance in deep routing architectures, we propose VR-Ly-EXP4, a distributed algorithm integrating Lyapunov optimization with a task-conditioned, variance-reduced EXP4 estimator. 
We establish sublinear regret bounds and prove near-optimality under stochastic arrivals. 
Extensive experiments on large-scale workloads demonstrate that VR-Ly-EXP4 dominates other standard baselines, leading to improved learning stability and lower average inference error across all tasks received by the system.

\bibliography{uai2026-template}

\newpage

\onecolumn

\title{Online Learning for Multi-Layer Hierarchical Inference under Partial and Policy-Dependent Feedback \\(Appendix)}
\maketitle

\appendix

\section{Algorithm Pseudo Code}\label{app:code}

\subsection{Pseudo Code of VR-Ly-EXP4}\label{app:code-VR}

\begin{algorithm}[h]
\caption{VR-Ly-EXP4 (Routing) at node $n\in\mathcal{N}_k$, $k<K$}
\label{alg:vr_ly_exp4}
\resizebox{0.77\linewidth}{!}{%
\begin{minipage}{\linewidth}
\begin{algorithmic}[1]
\REQUIRE Task set $\mathcal{Y}$; destination set $U_n=\mathcal{N}_{k+1}$; expert grids $\{H_y\}_{y\in\mathcal{Y}}$ with thresholds $\{\theta_h\}_{h\in H_y}$; control $v$; learning rate $\eta$; exploration $\lambda\in(0,1)$.
\STATE \textbf{Joint experts:} $\mathcal{E}_n(y)\triangleq H_y\times U_n$ for each $y\in\mathcal{Y}$. 
\STATE \textbf{Initialize:} For all $y\in\mathcal{Y}$ and $e=(h,n')\in \mathcal{E}_n(y)$, set 
\(
\hat g_{n,y}^{e}(1)=0, \bar{f}_{n}^{n'}(\pi_h)=0.
\)
\FOR{each time slot $t=1,2,\ldots,T$}
    \STATE Observe current virtual-queue realizations $\{q_{n'}(t)\}_{n'\in U_n}$.
    \FOR{each task type $y\in\mathcal{Y}$}
        \STATE Update EXP4 weights over $(h,n')\in\mathcal{E}_n(y)$:
\(
w_{n,y}^{(h,n')}(t)
=
\frac{\exp\!\big(-\eta \hat{g}_{n,y}^{(h,n')}(t-1)\big)}
{\sum_{e' \in \mathcal{E}_n(y)}
\exp\!\big(-\eta \hat{g}_{n,y}^{e'}(t-1)\big)}.
\)
    \ENDFOR

    \FOR{each arriving job $j\in\mathcal{J}(t)$ with the entry node $n_1^j=n$}
        \STATE Execute local inference for job $j$ and observe context $(y(j), z_n(j))$, where $z_n(j)$ is the realized confidence score.
        \STATE Form action probabilities $p_n^j(\cdot)$ from weights:
        \begin{align*}
p^j_{n}(n') &= \sum_{h \in \mathcal{H}_{y(j)}}  w_{n,y(j)}^{(h,n')}(t^j) \mathbbm{1}_{\theta_h > z_n(j)}, \forall n'\in \mathcal{U}_n, \\
p^j_{n}(0) &= \sum_{e=(h,n') \in \mathcal{E}_n(y(j))}  w_{n,y(j)}^{e}(t^j) \mathbbm{1}_{\theta_h \leq z_n(j)}.
\end{align*}
        \STATE Exploration mix: $\tilde p_n^j(a)=(1-\lambda)p_n^j(a)+\lambda/(|\mathcal{U}_n|+1)$ for all $a\in \mathcal{U}_n\cup\{0\}$. 
        \STATE Sample action $a\sim \{\tilde p_n^j(a)\}_{a\in\mathcal{U}_n\cup\{0\}}$ and execute:
        \IF{$a=0$}
            \STATE Terminate locally (accept local output).
        \ELSE
            \STATE Offload job $j$ to upstream node $a\in \mathcal{U}_n$.
        \ENDIF

        \STATE \textit{Recursive terms.}
        \STATE Obtain $\{\rho_a^{\boldsymbol{\pi}}(j)\}_{a\in \mathcal{U}_n}$ (with $\rho_n^{\boldsymbol{\pi}}(j)=1$ at cloud $n\in\mathcal{N}_K$), set
        \(
        \rho_n^{\boldsymbol{\pi}}(j)= \sum_{a\in \mathcal{U}_n}\tilde p_n^j(a)\,\rho_a^{\boldsymbol{\pi}}(j).
        \)
        \STATE Obtain $\{\bar{f}_{a}(j,\pi_{a})\}_{a\in \mathcal{U}_n}$, set
\(
\bar{f}_{n}(j,\pi_{n})=vp_{n}^j(0)b(j,\mathbf{x}_{n}(t^j))+\sum_{a\in\mathcal{U}_{n}}  p_{n}^j(a) \left(q_{a}(t^j)c^j(n,a)+\bar{f}_{a}(j,\pi_a)\right).
\)
        \FOR{each $e=(h,u)\in E_n(y(j))$}
            \STATE Compute expert loss
\(
\hat{F}_{\text{vr}, n}^{n'}(j,\pi_h)=\mathbbm{1}_{\text{fb},n}(j)\frac{f_{n}^{n'}(j,\pi_h)-\bar{f}_{n,y(j)}^{n'}(\pi_h)}{\rho^{\boldsymbol{\pi}}_{n}(j)}
+\bar{f}_{n,y(j)}^{n'}(\pi_h)
\), and obtain the realization $\hat{f}_{\text{vr}, n}^{n'}(j,\pi_h)$ according to $\mathbbm{1}_{\text{fb},n}(j)$.
            \STATE Accumulate: $\hat{g}_{n,y}^{(h,n')}(t-1)=\sum_{t'=1}^{t-1} \sum_{j\in\mathcal{J}_n(t')}\hat{f}_{\text{vr}, n}^{n'}(j,\pi_h)\mathbbm{1}_{n\in \omega^{\boldsymbol{\pi},j}}
\mathbbm{1}_{y(j)=y}.$
            \IF{$\mathbf{1}_{\mathrm{fb},n}(j)=1$}
                \STATE Baseline update:
                \(
                \bar f_{n,y(j)}^{(h,n')}= (1-\eta_b)\bar f_{n,y(j)}^{(h,n')}+\eta_b\,\frac{f_n^{n'}(j,\pi_h)}{\rho_n^\pi(j)}.
                \)
            \ENDIF
        \ENDFOR
    \ENDFOR
\ENDFOR
\end{algorithmic}
\end{minipage}%
}
\end{algorithm}

\subsection{Pseudo Code of Greedy Model Placement}\label{app:onload}
\begin{algorithm}[h]
\caption{Greedy Model Onloading at node $n$ (every $D$ slots)}
\label{alg:greedy_onloading}
\begin{algorithmic}[1]
\REQUIRE $\mathcal{M}$, sizes $\{s_m\}_{m\in\mathcal{M}}$, budget $\mu_n$, penalty $\nu$, previous set $\mathcal{M}_n(t-1)$
\STATE Define utility
\[
U_n(\mathcal{S}) = \mathbb{E}_{J\sim\mathcal{I}_n(t)} \Big[ 1 - \min_{m \in \mathcal{S}} \epsilon(J, m,\zeta) \Big]
- \nu \sum_{m \in \mathcal{S}} s_m \mathbbm{1}_{m \notin \mathcal{M}_n(t-1)}  
\]
\STATE $S\gets\emptyset$, $B\gets\mu_n$
\WHILE{$\exists m\in\mathcal{M}\setminus S$ with $s_m\le B$}
    \STATE $\mathcal{C}\gets\{m\notin S: s_m\le B\}$
    \STATE $m^\star\gets\arg\max_{m\in\mathcal{C}}
    \frac{\Delta U_n(m\mid S)}{s_m}$,
    \quad $\Delta U_n(m\mid S)=U_n(S\cup\{m\})-U_n(S)$
    \IF{$\Delta U_n(m^\star\mid S)<0$}
        \STATE \textbf{break}
    \ENDIF
    \STATE $S\gets S\cup\{m^\star\}$,\quad $B\gets B-s_{m^\star}$
\ENDWHILE
\STATE $\mathcal{M}_n(t)\gets S$
\end{algorithmic}
\end{algorithm}

\section{Additional Experimental Details and Results}\label{app:exp}
The experimental code and dataset are provided \href{https://anonymous.4open.science/r/llmooo-572B/README.md}{here}.  
\subsection{Dataset Details}\label{app:dataset_details}
To evaluate the proposed HI routing methods as well as the greedy model placement strategy, we collect LLMs inference results on various benchmark datasets from RouterBench~\citep{hu2024routerbench} and VL-RouterBench~\citep{huang2025vl}. 
We select 23 models, including text-only language models, vision-language, and multimodal foundation models spanning a wide range of architectures, parameter scales, and capability levels. The complete model set is listed below:
\begin{itemize}
    \item \textbf{GPT-2 family~\citep{radford2019language}:}
    GPT2-Large,
    GPT2-XL.
    
    \item \textbf{DeepSeek family~\citep{guo2025deepseek}:}
    DeepSeek-LLM-7B,
    DeepSeek-MoE-16B,
    DeepSeek-LLM-67B,
    DeepSeek-VL2,
    DeepSeek-VL2-Tiny.
    
    \item \textbf{Qwen family~\citep{bai2023qwen}:}
    Qwen2-0.5B,
    Qwen1.5-0.5B,
    Qwen2-72B,
    Qwen2.5-VL-32B-Instruct,
    Qwen2.5-VL-72B-Instruct.
    
    \item \textbf{Gemma~\citep{team2024gemma}:}
    Gemma3-27B.
    
    \item \textbf{Vision--Language / Multimodal models:}
    InternVL2.5-78B,
    Janus-Pro-1B,
    Janus-Pro-7B,
    Kimi-VL-A3B-Thinking,
    MiMo-VL-7B-RL,
    Phi-3.5-Vision,
    Pixtral-12B,
    Qianfan-VL-8B,
    SmolVLM2,
    LLaVA-NeXT-Vicuna-7B.
\end{itemize}
The smallest model in this suite is Qwen2-0.5B (and Qwen1.5-0.5B), which typically requires around 1--2\,GB of GPU memory for inference deployment (FP16), making it well suited for edge devices and lower-level computing nodes. The largest model is InternVL2.5-78B, which requires on the order of 150--160\,GB of GPU memory for full-precision deployment, and thus is only deployable for higher-level nodes equipped with substantial computational resources. Such large-capacity models are deployed selectively to process difficult tasks routed upward from lower layers in the hierarchy.

Following \cite{hu2024routerbench} and \cite{huang2025vl}, we include multimodal reasoning datasets (MMMU~\citep{yue2024mmmu}, MMBench~\citep{liu2024mmbench}, MathVista~\citep{lu2023mathvista}), document and vision-language benchmarks (DocVQA~\citep{mathew2021docvqa}, TextVQA~\citep{singh2019towards}, ChartQA~\citep{masry2022chartqa}), mathematical reasoning benchmarks (GSM8K~\citep{cobbe2021gsm8k}, MATH~\citep{hendrycksmath2021}), and general reasoning and knowledge benchmarks (BBH~\citep{suzgun2022challenging}, MMLU~\citep{hendryckstest2021}, GPQA~\citep{rein2024gpqa}, ARC-Challenge~\citep{clark2018think}, HellaSwag~\citep{zellers2019hellaswag}, Winogrande~\citep{sakaguchi2021winogrande}). 
We define task types $y\in\mathcal{Y}$ based on datasets: each dataset corresponds to at least one task type, and datasets containing multiple subtasks are further decomposed into multiple task types to capture heterogeneous difficulty and capability requirements. Overall, we group 77,988 inference job samples into 114 task types.

\subsection{System Setup Details}\label{app:setup_details}
\begin{table}[h]
\centering
\caption{System constraints across hierarchical layers. The memory budget increases with the layer depth, while the oracle layer has no memory constraint. Each non-entry node has a resource constraint of $0.4$.}
\begin{tabular}{@{}l|ccc@{}}
\toprule
\multirow{2}{*}{Configuration} & \multicolumn{3}{c}{Hierarchy Settings} \\ \cmidrule(l){2-4}
 & 3-layer (4-2-1) & 4-layer (8-4-2-1) & 5-layer (16-8-4-2-1) \\ \midrule
Memory Budget $\mu_n$ 
& (30, 100, --) 
& (30, 80, 200, --) 
& (30, 80, 150, 200, --) \\

Resource Constraint $\gamma_n$ 
& (--, 0.4, 0.4) 
& (--, 0.4, 0.4, 0.4) 
& (--, 0.4, 0.4, 0.4, 0.4) \\ 
\bottomrule
\end{tabular}
\end{table}
Each node is assigned a memory budget $\mu_n$ as specified above. We set the exploration rate $\lambda = 0.1$, drift-plus-penalty factor $v=70$, and downloading penalty $\nu=0.1$ in the experiments. 
% The large value of the drift-plus-penalty factor $v=70$ is mainly due to the different scales of the routing cost and the inference error as we model.

The offloading cost $c^j(n,n')$ is determined by the input size of a job (e.g., text length or image size) and the distance between nodes. We model one unit of routing cost corresponds to transmitting approximately 10KB characters. Vision tasks incur significantly higher costs, typically exceeding 100KB (i.e., over 10 cost units). Without careful offloading decisions, the routing budget can be rapidly exhausted, causing most jobs to be processed locally. 

\subsection{Baseline Details}\label{app:baseline_details}
\textbf{Pure Local:} Under this deterministic policy, the offloading probability is zero. All inference jobs are forced to execute and terminate locally at the initial entry node $n\in\mathcal{N}_1$.

\textbf{Random:} This baseline constructs a time-invariant probability distribution for routing actions. At any given node $n$, the probability of offloading a job $j$ to an upstream destination $a$, denoted by the probability $p_n^j(a)$, is fixed from a static distribution. The routing destination is selected randomly such that the aggregate offloading probability across all upstream nodes adheres to the long-term resource constraints defined for the system across all nodes.

\textbf{Round-Robin:} This strategy operates via a two-step routing sequence. First, the node uses a fixed probability distribution $\{p_n^j(0), 1-p_n^j(0)\}$ to determine whether the inference process should terminate locally or offload. 
If the realized decision dictates offloading, the specific destination node is selected sequentially from the available upstream set $\mathcal{U}_n$ in a round-robin manner. 
This alternation ensures an equitable distribution of offloaded jobs across all available upstream destinations.

\textbf{Calibration of Routing Probabilities:} To ensure a fair comparison, the routing probabilities for the static baselines should satisfy the node-level long-term resource constraints $\gamma_n$ across the system. 
We firstly compute the expected resource cost of routing all jobs per node with respect to its job arrival distribution. 
Because the network topology is constructed such that the number of nodes doubles at each successive layer, and every intermediate node of layer $k>1$ is restricted by a uniform resource budget of $\gamma_n = 0.4$, the ``effective'' allowable resource consumption from each node of layer $k-1$ is $0.2$.
By dividing this allocated budget by the expected total resource cost per node, we derive the offloading probability required to maintain long-term resource feasibility within the system. This calculation yields an average offloading probability of $0.12$ (representing the sum $\sum_{a \in \mathcal{U}_n} p_n^j(a)$) for \emph{Random} and \emph{Round-Robin} baseline methods.

\textbf{Comparison of Resource Utilization.}
We validate that these calibrated probabilities satisfy the resource constraints by tracking the average routing costs across all nodes. 
The average resource consumption for each routing policy conforms to the theoretical limits. The empirical average routing cost for each method is recorded as follows:
0.4269 (Ly-EXP4), 
0.3816 (VR-Ly-EXP4), 
0.3928 (VR-Ly-EXP4-LocalLoss), 
0.3965 (Random),
0.3842 (Round-Robin).

\subsection{Additional Results}\label{app:extra_results}

\textbf{Entropy of expert weights.}
The entropy of the expert weight distribution serves as a quantitative measure of policy convergence; a more rapid rate of decay in entropy indicates accelerated learning and faster policy stabilization. 
As shown in Figure \ref{fig:entropy},
the integration of the variance reduction mechanism, as implemented in both VR-Ly-EXP4 and VR-Ly-EXP4-LocalLoss, substantially accelerates this entropy decay, thereby facilitating faster policy learning compared to standard estimator Ly-EXP4. 
Furthermore, incorporating the recursive expected loss from uplink nodes ($\overline{f}_n(j,\pi_n)$) enhances the learning quality, particularly during the initial exploration phase. This recursive formulation ensures that each intermediate node can accurately model and evaluate the true routing loss across distinct uplink destinations, rather than optimizing based solely on localized offloading costs.
\begin{figure}
    \centering
    \includegraphics[width=0.5\linewidth]{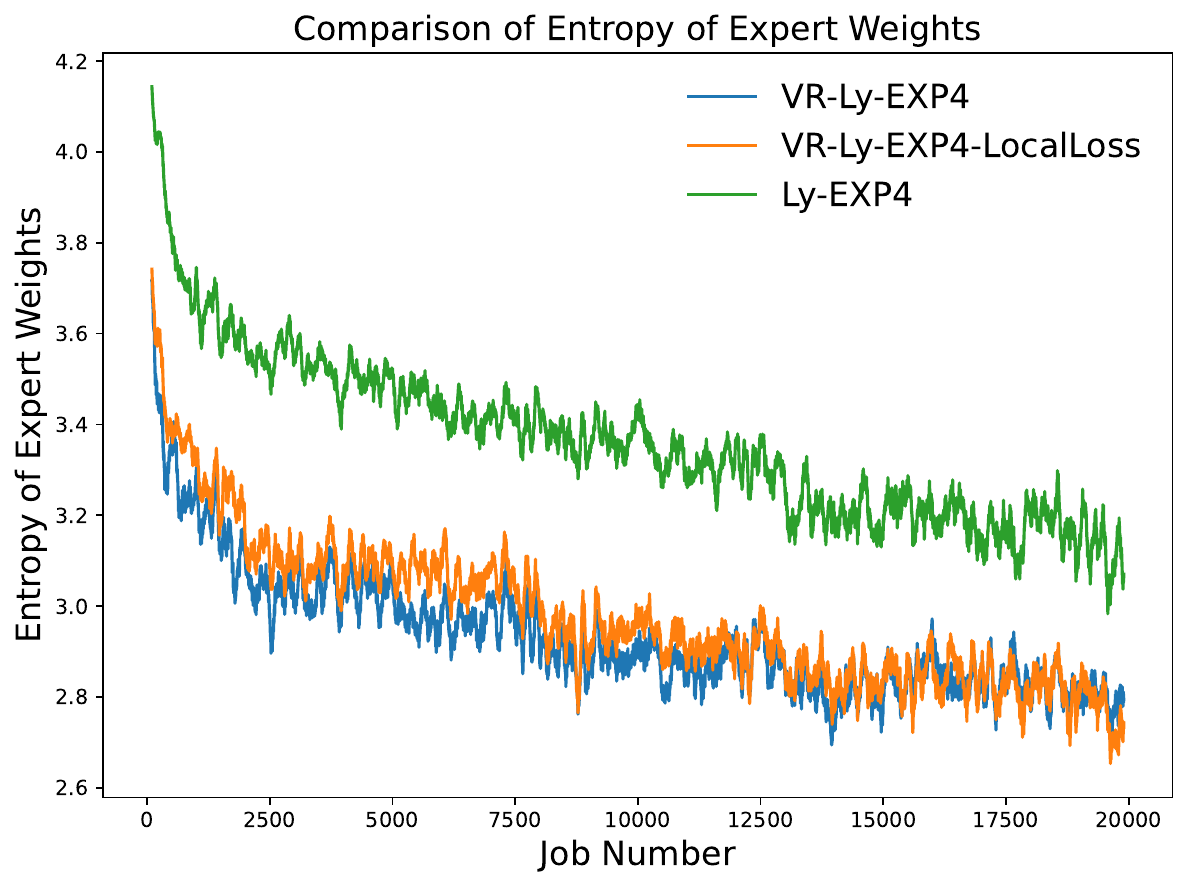}
    \caption{3-layer hierarchy (4-2-1). Entropy of expert weights, averaged across all nodes.}
    \label{fig:entropy}
\end{figure}

% \textbf{Average Error Plots}
% ``Optimal'' offloading without constraints avg error: 0.1201.

\textbf{Ablation on Model Placement Strategies.}
We compare the proposed greedy model placement with two non-adaptive baselines:

\textit{(1) Random-fixed.}
Each node randomly selects models until its memory budget is filled and then keeps the placement fixed throughout training. This strategy ignores workload dynamics and system state.

\textit{(2) Layer-diverse.}
The model pool is partitioned into $K$ disjoint groups, and each layer is assigned a distinct subset. While this enforces inter-layer diversity, it does not adapt placement to the task distribution or network conditions.

Table~\ref{tab:onloading} reports the error rates under different routing algorithms. The greedy placement (Algorithm \ref{alg:greedy_onloading}) consistently improves performance for adaptive routing methods (Ly-EXP4 and VR-Ly-EXP4), achieving the lowest error overall. In contrast, static placements (Random-fixed and Layer-diverse) fail to adapt to workload variations, leading to higher error. Notably, although Greedy performs worse under purely random routing due to reduced exploration diversity, it becomes clearly superior once learning-based routing is enabled, demonstrating the importance of adaptive model placement.

\begin{table}[h]
\centering
\caption{Model placement comparison (3-layer hierarchy). Error rates under different routing and model placement strategies.}
\label{tab:onloading}
\begin{tabular}{@{}l|ccc@{}}
\toprule
\multirow{2}{*}{Model Placement} & \multicolumn{3}{c}{Error Rate ($\downarrow$)} \\ \cmidrule(l){2-4}
 & Random Routing & Ly-EXP4 & VR-Ly-EXP4 \\ \midrule
Random-fixed   & \textbf{0.4332} & 0.3717 & 0.3335 \\
Layer-diverse  & 0.4455 & 0.3928 & 0.3477 \\
Greedy (Algorithm \ref{alg:greedy_onloading})        & 0.4805 & \textbf{0.3433} & \textbf{0.3172} \\
\bottomrule
\end{tabular}
\end{table}

\newpage

\section{Theoretical Analysis}
\label{app:proofs}

% For each node $n \in \mathcal{N}\setminus \mathcal{N}_{1}$, we define a virtual queue $Q_{n}(t)$ that tracks the deviation between instantaneous resource consumption and the per-slot budget $\gamma_{n}\tau$. 
% Define $Q_{n}(0)=0$. 
% The queue evolves as
% \begin{align}
% Q_{n}(t+1) = \max\left\{Q_{n}(t) + C^{\boldsymbol{\pi}}_{n}(t) - \gamma_{n} \tau, \, 0\right\}.
% \end{align}
% The virtual queue tracks the accumulated debt of processing jobs offloaded from lower-layer nodes. Each node maintains its own virtual queue, except for the entry nodes in the initial layer $\mathcal{N}_1$. 
% We assume the exogenous job arrivals at the initial layer $\mathcal{N}_1$ have a bounded second moment, i.e., $\mathbb{E}[|\mathcal{J}(t)|^2] \le A_{max}$. This strict upper bound is theoretically necessary for two reasons: (1) it guarantees the finiteness of the Lyapunov drift constant $B$ by capping the maximum per-slot offloading cost $C^{\boldsymbol{\pi}}_{n}(t)$, and (2) it ensures the system satisfies Slater's condition (strict feasibility), which is fundamentally required to stabilize the distributed virtual queues.

\subsection{Decomposition of the Task-Conditioned Baseline} \label{app:baseline_compute}

we first establish theoretical formulation for the baseline loss $\bar{f}_{n, y}^{n'}(\pi_h)=\mathbb{E}_{\tilde{j} | y(\tilde{j})=y}[f_{n}^{n'}(\tilde{j},\pi_h)]$ and then outline the practical approximation utilized in our experiments.

We unfold the conditional expectation of the loss for a random future job $\tilde{j}$ of task type $y$. By treating the system state variables $q_{n'}(t)$ and $\mathbf{x}_n(t)$ as fixed during time slot $t$, the randomness is strictly governed by the task arrival distribution. 

Using the deterministic loss formulation, the theoretical baseline is expanded using the law of total expectation conditioned on the threshold event:
\begin{align}
\mathbb{E}_{J \sim \mathcal{I}_n(t) | y(J)=y}[f_{n}^{n'}(J,\pi_h)] &= v \cdot \mathbb{P}(Z_n(J) \geq \theta_h | y(J)=y) \cdot \mathbb{E}[B(J, \mathbf{x}_n(t)) | Z_n(J) \geq \theta_h, y(J)=y] \nonumber \\
&\quad + \mathbb{P}(Z_n(J) < \theta_h | y(J)=y) \cdot \Big( q_{n'}(t) \mathbb{E}[c^{J}(n,n') | Z_n(J) < \theta_h, y(J)=y] \nonumber \\
&\quad \quad \quad \quad \quad \quad \quad \quad \quad \quad + \mathbb{E}[\bar{f}_{n'}(J, \pi_{n'}) | Z_n(J) < \theta_h, y(J)=y] \Big). \label{eq:unfolded_baseline}
\end{align}

This decomposition provides critical theoretical insight into the learning dynamics. The baseline explicitly captures the correlation between the local confidence score $Z_n(\tilde{j})$ and the actual inference error $B(\tilde{j}, \mathbf{x}_n(t))$ for a specific task type. By tracking this expectation historically, the variance-reduced estimator effectively isolates the stochasticity of the partial feedback indicator $\mathbbm{1}_{\text{fb}}(\tilde{j})$ from the inherent variance of the workload difficulty.

While Eq. \ref{eq:unfolded_baseline} formalizes the exact theoretical expectation, computing this analytically is intractable in practical deployments due to the unknown joint distributions governing the task arrival dynamics $\mathcal{I}_n(t)$ and stochastic inference errors $B(J, \mathbf{x}_n(t))$.
Alternatively, we estimate the baseline empirically using an exponential moving average (with learning rate $\eta_b$), updated exclusively when a valid terminal feedback signal is observed (i.e., $\mathbbm{1}_{\text{fb},n}(j) = 1$):
\begin{align}
\bar{f}_{n, y}^{n'}(\pi_h) = 
\begin{cases} 
(1-\eta_b)\bar{f}_{n, y}^{n'}(\pi_h) + \eta_b \frac{f_n^{n'}(j,\pi_h)}{\rho_n^\pi(j)} & \text{if } \mathbbm{1}_{\text{fb},n}(j) = 1, \\ 
\bar{f}_{n, y}^{n'}(\pi_h) & \text{otherwise}.
\end{cases}
\end{align}
By explicitly dividing the realized loss by the marginal feedback probability $\rho_n^\pi(j)$, this update corrects for partial observability and ensures the baseline $\bar{f}_{n, y}^{n'}(\pi_h)$ acts as an unbiased estimator of the theoretical expectation.

Notice that the unbiasedness of the baseline itself does not affect the unbiasedness of the variance-reduced estimator $\hat{F}_{\text{vr}, n}^{n'}(j,\pi_h)$. 
Because the baseline term is subtracted within the importance-weighted fraction and subsequently added back outside of it, it mathematically cancels out when computing the expectation of the estimator.
This guarantees $\hat{F}_{\text{vr}, n}^{n'}(j,\pi_h)$ remains unbiased regardless of the empirical accuracy of the baseline $\bar{f}_{n, y}^{n'}(\pi_h)$.

\subsection{Proof of Proposition \ref{prop:queue}}\label{app:proposition}
\begin{proof}
By the definition of the virtual queue evolution, we have:
\begin{align}
Q_{n}(t+1) = \max\left\{Q_{n}(t) + C^{\boldsymbol{\pi}}_{n}(t) - \gamma_{n} \tau, \, 0\right\}.
\end{align}

From the property of the maximum function, this strictly implies:
\begin{align}
Q_{n}(t+1) - Q_{n}(t) \ge C^{\boldsymbol{\pi}}_{n}(t) - \gamma_{n} \tau.
\end{align}

Summing this inequality over $t = 1, \dots, T$ yields a telescoping sum on the left-hand side:
\begin{align}
Q_{n}(T+1) - Q_{n}(1) \ge \sum_{t=1}^T \left(C^{\boldsymbol{\pi}}_{n}(t) - \gamma_{n} \tau\right).
\end{align}

Dividing both sides by $T$ and taking the expectation $\mathbb{E}[\cdot]$ gives:
\begin{align}
\frac{\mathbb{E}[Q_{n}(T+1)]}{T} - \frac{\mathbb{E}[Q_{n}(1)]}{T} \ge \frac{1}{T} \sum_{t=1}^T \mathbb{E}[C^{\boldsymbol{\pi}}_{n}(t)] - \gamma_{n} \tau.
\end{align}

Taking the limit as $T \to \infty$ and noting that the initial queue state is zero ($Q_{n}(1) = 0$), we obtain:
\begin{align}
\lim_{T \to \infty} \frac{\mathbb{E}[Q_{n}(T+1)]}{T} \ge \lim_{T \to \infty} \frac{1}{T} \sum_{t=1}^T \mathbb{E}[C^{\boldsymbol{\pi}}_{n}(t)] - \gamma_{n} \tau.
\end{align}

Given that the queue $Q_{n}(t)$ is mean-rate stable, the limit on the left-hand side evaluates exactly to $0$. Rearranging the inequality yields:
\begin{align}
\lim_{T \to \infty} \frac{1}{T} \sum_{t=1}^T \mathbb{E}[C^{\boldsymbol{\pi}}_{n}(t)] \le \gamma_{n} \tau.
\end{align}

This perfectly recovers the long-term resource constraint defined in \eqref{eq:constraint_offload}, completing the proof.
\end{proof}

\subsection{Detailed Steps to Derive the Drift Bound (\ref{eq:driftbound})} \label{app:driftbound}

To ensure all virtual queues are mean-rate stable, we define a quadratic Lyapunov function tracking the resource utilization across all intermediate and cloud nodes:
\begin{align}
\mathcal{L}(t) = \frac{1}{2} \sum_{n \in \mathcal{N} \setminus \mathcal{N}_1} \left(Q_{n}(t)\right)^2.
\end{align}

If we can show the Lyapunov function is stable across time, then all virtual queues are mean-rate stable. To show this, we analyze the single-slot conditional Lyapunov drift $\Delta(\boldsymbol{q}(t)) = \mathbb{E}[\mathcal{L}(t+1) - \mathcal{L}(t) \mid \boldsymbol{Q}(t) = \boldsymbol{q}(t)]$, where $\boldsymbol{Q}(t)$ is the random vector of all virtual queues and $\boldsymbol{q}(t) = \{q_n(t)\}_{n \in \mathcal{N} \setminus \mathcal{N}_1}$ is its deterministic realization at the beginning of slot $t$. Squaring the queue evolution equation yields:
\begin{align}
\mathcal{L}(t+1) - \mathcal{L}(t) 
&= \frac{1}{2} \sum_{n \in \mathcal{N} \setminus \mathcal{N}_1} \left[ \left(Q_{n}(t+1)\right)^2 - \left(Q_{n}(t)\right)^2 \right] \\
&\le \frac{1}{2} \sum_{n \in \mathcal{N} \setminus \mathcal{N}_1} \left[ \left(Q_{n}(t) + C^{\boldsymbol{\pi}}_{n}(t) - \gamma_{n}\tau\right)^2 - \left(Q_{n}(t)\right)^2 \right] \label{eq:max_inequality} \\
&= \frac{1}{2} \sum_{n \in \mathcal{N} \setminus \mathcal{N}_1} \left[ \left(C^{\boldsymbol{\pi}}_{n}(t) - \gamma_{n}\tau\right)^2 + 2Q_{n}(t)\left(C^{\boldsymbol{\pi}}_{n}(t) - \gamma_{n}\tau\right) \right] \\
&= \frac{1}{2} \sum_{n \in \mathcal{N} \setminus \mathcal{N}_1} \left(C^{\boldsymbol{\pi}}_{n}(t) - \gamma_{n}\tau\right)^2 + \sum_{n \in \mathcal{N} \setminus \mathcal{N}_1} Q_{n}(t)\left(C^{\boldsymbol{\pi}}_{n}(t) - \gamma_{n}\tau\right).
\end{align}
Note that inequality \eqref{eq:max_inequality} strictly holds due to the mathematical property $(\max\{0, x\})^2 \le x^2$. 

Taking the conditional expectation given the realized queue state $\boldsymbol{Q}(t) = \boldsymbol{q}(t)$ bounds the drift:
\begin{align}
\Delta(\boldsymbol{q}(t)) \le \alpha + \sum_{n \in \mathcal{N} \setminus \mathcal{N}_1} q_{n}(t) \mathbb{E}\left[C^{\boldsymbol{\pi}}_{n}(t) - \gamma_{n}\tau|\mathbf{Q}(t)=\mathbf{q}(t) \right],
\end{align}
where $B$ is strictly defined as a finite constant upper bounding the variance of the queue updates:
\begin{align}
\alpha \triangleq \sup_t \frac{1}{2} \sum_{n \in \mathcal{N} \setminus \mathcal{N}_1} \mathbb{E}\left[ \left(C^{\boldsymbol{\pi}}_{n}(t) - \gamma_{n}\tau\right)^2 |\mathbf{Q}(t)=\mathbf{q}(t)\right].
\end{align}
Because the random exogenous job arrivals $\mathcal{J}(t)$ at the initial layer have a strictly bounded second moment ($\mathbb{E}[|\mathcal{J}(t)|^2] \le A_{max}$), and the deterministic single-hop transmission cost $c^j(n', n)$ is bounded, the maximum possible accumulated cost $C_{n}^{\boldsymbol{\pi}}(t)$ per slot is strictly finite. Consequently, $\alpha$ is a strictly positive, finite constant.

\subsection{Proof of Lemma \ref{lemma:variance}}\label{app:variance}
\begin{proof}
To streamline the analysis for a given node $n\in\mathcal{N}_k$, upstream node $n'\in\mathcal{N}_{k+1}$, a specific realized job $j$ with task type $y$, and an expert policy $\pi_h$, we define the following deterministic variables:
\begin{itemize}
    \item Let $x_j = f_n^{n'}(j,\pi_h)$ denote the true ideal loss for job $j$.
    \item Let $\beta = \bar{f}_{n,y}^{n'}(\pi_h)$ denote the baseline loss estimator computed from historical observations.
    \item Let $\rho_j = \rho_n^{\boldsymbol{\pi}}(j) \in (0,1]$ represent the marginal routing probability that job $j$ reaches the final oracle layer, such that $\mathbb{E}[\mathbbm{1}_{\text{fb},n}(j)] = \rho_j$.
\end{itemize}

The naive estimator and the variance-reduced estimator are defined as the random variables:
\begin{align}
\hat{F}_{\text{naive},n}^{n'}(j,\pi_h) &= \mathbbm{1}_{\text{fb},n}(j)\frac{x_j}{\rho_j} \\
\hat{F}_{\text{vr},n}^{n'}(j,\pi_h) &= \mathbbm{1}_{\text{fb},n}(j)\frac{x_j-\beta}{\rho_j} + \beta
\end{align}
The only randomness comes from the feedback indicator $\mathbbm{1}_{\text{fb},n}$.
We evaluate the variance of both estimators strictly with respect to the Bernoulli random variable $\mathbbm{1}_{\text{fb},n}(j) \sim \text{Bernoulli}(\rho_j)$. The exact variance of a scaled Bernoulli random variable $c \cdot \mathbbm{1}_{\text{fb},n}(j)$, where $c$ is a deterministic constant, is given by $c^2 \rho_j (1 - \rho_j)$.

First, we calculate the exact variance of the naive estimator:
\begin{equation}
Var\left(\hat{F}_{\text{naive},n}^{n'}(j,\pi_h)\right) = \left(\frac{x_j}{\rho_j}\right)^2 Var(\mathbbm{1}_{\text{fb},n}(j)) = \frac{x_j^2}{\rho_j^2} \rho_j (1 - \rho_j) = x_j^2 \frac{1-\rho_j}{\rho_j}
\end{equation}

Next, we calculate the exact variance of the variance-reduced estimator. Because $\beta$ is a deterministic constant for a realized job $j$, it does not affect the variance calculation:
\begin{equation}
Var\left(\hat{F}_{\text{vr},n}^{n'}(j,\pi_h)\right) = Var\left(\mathbbm{1}_{\text{fb},n}(j)\frac{x_j-\beta}{\rho_j}\right) = \left(\frac{x_j-\beta}{\rho_j}\right)^2 \rho_j (1 - \rho_j) = (x_j-\beta)^2 \frac{1-\rho_j}{\rho_j}
\end{equation}

Let $\Delta_{\text{Var}}(j)$ represent the exact reduction in variance for the specific job $j$. We expand the squares fully to show the precise difference:
\begin{align}
\Delta_{\text{Var}}(j) &= Var\left(\hat{F}_{\text{naive},n}^{n'}(j,\pi_h)\right) - Var\left(\hat{F}_{\text{vr},n}^{n'}(j,\pi_h)\right) \\
&= x_j^2 \frac{1-\rho_j}{\rho_j} - (x_j-\beta)^2 \frac{1-\rho_j}{\rho_j} \\
&= \frac{1-\rho_j}{\rho_j} \left[ x_j^2 - (x_j^2 - 2x_j\beta + \beta^2) \right] \\
&= \frac{1-\rho_j}{\rho_j} (2x_j\beta - \beta^2) \\
&= \beta(2x_j - \beta)\frac{1-\rho_j}{\rho_j}
\end{align}

Since $\rho_j$ represents the routing probability along the upstream path, it strictly satisfies $\rho_j \in (0,1]$, which mathematically guarantees that $\frac{1-\rho_j}{\rho_j} \geq 0$.

By the prerequisite condition established in the lemma, the baseline loss satisfies $0 < \beta \leq 2x_j$. Therefore, we have strict positivity for the baseline $\beta > 0$ and non-negativity for the residual $(2x_j - \beta) \geq 0$. Consequently, their product is non-negative, ensuring:
\begin{equation}
\Delta_{\text{Var}}(j) \geq 0
\end{equation}

This strictly guarantees:
\begin{equation}
Var\left(\hat{F}_{\text{vr},n}^{n'}(j,\pi_h)\right) \leq Var\left(\hat{F}_{\text{naive},n}^{n'}(j,\pi_h)\right)
\end{equation}
This completes the proof.
\end{proof}

\subsection{Proof of Theorem \ref{theorem:regret}}

\textbf{Definition 1 (History Filtration).} Let $\pi$ denote the joint routing policy across all computing nodes. The system history up to the end of time slot $t-1$ is captured by the filtration $\mathcal{F}_{t-1}$, defined as the $\sigma$-algebra generated by all prior stochastic job arrivals, system states, routing decisions, and partially observed feedback:
\begin{align}
\mathcal{F}_{t-1} = \sigma\Big( \big\{ \mathcal{J}(t'), \mathbf{Q}(t'), \{\mathbf{x}_n(t')\}_{n\in\mathcal{N}}, \{\mathbf{O}_n^j\}_{n\in\mathcal{N},j\in\mathcal{J}(t')}: 1 \leq t' \leq t-1 \big\} \Big).
\end{align}
$\mathcal{F}_{t-1}$ tracks the history of the system. 
We define conditional expectation $\mathbb{E}_{t-1}[\cdot]=\mathbb{E}[\cdot|\mathcal{F}_{t-1}]$.

To establish the regret bound of the proposed variance-reduced EXP4 algorithm in the arbitrary-depth multi-node setting, we first rigorously prove the unbiasedness of the loss estimator and the boundedness of the multi-tier virtual queues. 

\begin{lemma}[Unbiasedness of the Estimator]\label{lemma:unbiasedness}
For any intermediate node $n \in \mathcal{N} \setminus \mathcal{N}_K$, its upstream node $u\in\mathcal{U}_n$, and a specific job $j$, the random variance-reduced estimator $\hat{F}_{\text{vr}, n}^u(j, \pi_h)$ is an unbiased estimator of the deterministic true joint expert loss $f_n^u(j, \pi_h)$. Specifically:
\begin{align}
\mathbb{E}_{t-1}\left[\hat{F}_{\text{vr}, n}^u(j, \pi_h)\right] = f_n^u(j, \pi_h)
\end{align}
\end{lemma}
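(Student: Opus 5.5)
The plan is to condition on enough information that every term in the estimator other than the feedback indicator $\mathbbm{1}_{\text{fb},n}(j)$ is fixed. After that, unbiasedness reduces to a single computation: the conditional mean of that indicator equals $\rho_n^{\boldsymbol{\pi}}(j)$. Concretely, fix job $j$ with $t^j=t$ and condition on $\mathcal{F}_{t-1}$, together with the job's own context $(y(j),z_n(j))$ and the event that $j$ is at node $n$. (The statement treats $f_n^u(j,\pi_h)$ as deterministic, so this enlarged conditioning is implicit.)

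Under this conditioning the following are all measurable:
\begin{itemize}
\item the weights $w_{n,y}^{e}(t)$, which are computed from $\hat g(t-1)$;
\item the action probabilities $p_n^j(\cdot)$ and their $\lambda$-mixtures at $n$ and at every node upstream of $n$;
\item the queue states $q_{u}(t)$ and the placements $\mathbf{x}_n(t)$;
\item hence the baseline $\bar f_{n,y}^{u}(\pi_h)$, which is updated only from past feedback, and the target $f_n^u(j,\pi_h)$, through its recursive terms $\bar f_{u}(j,\pi_u)$.
\end{itemize}
The marginal $\rho_n^{\boldsymbol{\pi}}(j)$ is likewise deterministic, and $\lambda$-exploration makes it strictly positive. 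The only remaining randomness in $\hat F_{\text{vr},n}^u$ is therefore the indicator.

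The key step is to show $\mathbb{E}_{t-1}[\mathbbm{1}_{\text{fb},n}(j)\mid n\in\Omega^{\boldsymbol{\pi},j}]=\rho_n^{\boldsymbol{\pi}}(j)$. I would prove this by backward induction on the layer index. At the base, layer $K$ has $\rho=1$, since the oracle always yields feedback. For the step, take $n\in\mathcal{N}_k$ and apply the tower property over its sampled action $a\sim\tilde p_n^j$. Local termination at $k<K$ gives no feedback. Offloading to $a\in\mathcal{U}_n$ gives, by the inductive hypothesis at layer $k+1$, conditional probability $\rho_a^{\boldsymbol{\pi}}(j)$. Summing over actions yields $\sum_{a}\tilde p_n^j(a)\rho_a^{\boldsymbol{\pi}}(j)$, which is exactly the recursion defining $\rho_n^{\boldsymbol{\pi}}$.

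Two facts make this step valid. First, routing randomizations at successive nodes are conditionally independent given the history and the job's contexts. Second, upstream policies for job $j$ are frozen at slot $t$, because routing finishes within the slot. One caveat: the confidence scores $z_{a}(j)$ at upstream nodes are random, so $\rho_a^{\boldsymbol{\pi}}(j)$ should be read as the probability conditional on the realized upstream contexts, as in Algorithm~\ref{alg:vr_ly_exp4}, where nodes report $\rho_a^{\boldsymbol{\pi}}(j)$ after executing $j$. I expect pinning down this conditioning, so that the $\rho$ in the denominator matches the true conditional feedback probability, to be the main obstacle. If needed, I would enlarge the conditioning to include all upstream confidence scores of $j$, which does not change $f_n^u(j,\pi_h)$.

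Given this, linearity of conditional expectation gives
\[
\mathbb{E}_{t-1}\big[\hat F_{\text{vr},n}^u(j,\pi_h)\big]=\frac{f_n^u(j,\pi_h)-\bar f_{n,y}^{u}(\pi_h)}{\rho_n^{\boldsymbol{\pi}}(j)}\,\rho_n^{\boldsymbol{\pi}}(j)+\bar f_{n,y}^{u}(\pi_h)=f_n^u(j,\pi_h).
\]
This identity holds for any value of the baseline, in line with the remark in Appendix~\ref{app:baseline_compute}. Finally, the tower property removes the extra conditioning on the job's context whenever the target is read as the conditional expectation.
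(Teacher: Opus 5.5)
Your proposal is correct and follows the same argument as the paper. You fix everything except $\mathbbm{1}_{\text{fb},n}(j)$, use that its conditional mean is $\rho_n^{\boldsymbol{\pi}}(j)$, and let linearity cancel the baseline; the paper asserts $\mathbb{E}_{t-1}[\mathbbm{1}_{\text{fb},n}(j)]=\rho_n^{\boldsymbol{\pi}}(j)$ ``by definition,'' whereas you prove it by backward induction with the mixed $\tilde p_n^j$ that is actually sampled, and you condition explicitly on $n\in\Omega^{\boldsymbol{\pi},j}$ and on the upstream contexts --- without the former, the indicator's mean would carry an extra factor $\mathbb{P}(n\in\Omega^{\boldsymbol{\pi},j})$, so your version is the one the regret proof actually uses.
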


\begin{proof}[Proof of Lemma \ref{lemma:unbiasedness}]
By definition of the variance-reduced estimator under partial feedback:
\begin{align}
\hat{F}_{\text{vr}, n}^u(j, \pi_h) = \mathbbm{1}_{\text{fb},n}(j) \frac{f_n^u(j, \pi_h) - \bar{f}_{n,y(j)}^u(\pi_h)}{\rho_n^{\boldsymbol{\pi}}(j)} + \bar{f}_{n,,y(j)}^u(\pi_h)
\end{align}
Taking the conditional expectation with respect to the history $\mathcal{F}_{t-1}$, the only random variable specific to the feedback mechanism is the binary indicator $\mathbbm{1}_{\text{fb},n}(j)$, which evaluates to $1$ if the job visited node $n$ and reached the final oracle layer, and $0$ otherwise. By definition, we have $\mathbb{E}_{t-1}[\mathbbm{1}_{\text{fb},n}(j)] = \rho_n^{\boldsymbol{\pi}}(j)$.
\begin{align}
\mathbb{E}_{t-1}\left[\hat{F}_{\text{vr}, n}^u(j, \pi_h)\right] &= \mathbb{E}_{t-1}[\mathbbm{1}_{\text{fb},n}(j)] \frac{f_n^u(j, \pi_h) - \bar{f}_{n,,y(j)}^u(\pi_h)}{\rho_n^{\boldsymbol{\pi}}(j)} + \bar{f}_{n,,y(j)}^u(\pi_h) \nonumber \\
&= \rho_n^{\boldsymbol{\pi}}(j) \frac{f_n^u(j, \pi_h) - \bar{f}_{n,,y(j)}^u(\pi_h)}{\rho_n^{\boldsymbol{\pi}}(j)} + \bar{f}_{n,,y(j)}^u(\pi_h) \nonumber \\
&= f_n^u(j, \pi_h) - \bar{f}_{n,,y(j)}^u(\pi_h) + \bar{f}_{n,,y(j)}^u(\pi_h) \nonumber \\
&= f_n^u(j, \pi_h)
\end{align}
This explicitly confirms strict unbiasedness.
\end{proof}

\begin{lemma}[Bounded Moments for Multi-Tier Queues]\label{lemma:bounded_moments}
Assume Slater's condition strictly holds with slack $\epsilon > 0$, guaranteeing strict feasibility of the shared network capacity. Let $C_{min}$ be the deterministic lower bound on resource costs. For the trade-off parameter $v > 0$, the random virtual queue $Q_n(t)$ at any node $n \in \mathcal{N} \setminus \mathcal{N}_1$ possesses bounded first and second moments:
\begin{align}
\mathbb{E}[Q_n(t)] \le \mathcal{O}\left(\frac{v}{C_{min}}\right), \quad \mathbb{E}[(Q_n(t))^2] \le \mathcal{O}\left(\left(\frac{v}{C_{min}}\right)^2\right)
\end{align}
\end{lemma}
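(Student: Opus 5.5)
The plan is the standard Neely-style drift argument, run per node. The key fact is that whenever a queue is large, the drift-plus-penalty rule makes offloading into that node unattractive, so the queue has negative expected drift above a threshold of order $v/C_{min}$.

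\emph{Step 1: deterministic threshold.} Fix $n\in\mathcal{N}_k$, $k>1$. The per-slot objective \eqref{eq:obj_perslot} charges every offloading step into $n$ a penalty of $q_n(t)c^j(n',n)\ge q_n(t)C_{min}$. The alternative, local termination at $n'$, costs at most $v$ per job because $B\in\{0,1\}$. So once $q_n(t) > v/C_{min}$, every threshold expert recommending $n$ is dominated for every job by the expert that terminates locally. The expert-based learner therefore shifts weight away from routing into $n$. What remains is exploration mass: $\lambda$-exploration still routes at most a $\lambda/(|\mathcal{U}_{n'}|+1)$ fraction of jobs into $n$. Slater's condition with slack $\epsilon$ is exactly what I will use to ensure that this residual load, plus the learning lag, satisfies $\mathbb{E}[C_n^{\boldsymbol{\pi}}(t)\mid\mathcal{F}_{t-1}] \le \gamma_n\tau-\epsilon$ whenever $Q_n(t)\ge Q^\star \triangleq v/C_{min}+ c_{\max}\kappa_1$.

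\emph{Step 2: first moment.} With this negative drift above $Q^\star$ and the per-slot increment bounded in second moment by Assumption~\ref{assumption:arrival}, I will apply a one-dimensional drift lemma to $Q_n$ alone: a Hajek-type bound, or directly the quadratic drift of Appendix~\ref{app:driftbound} restricted to node $n$. This gives
\begin{align}
\mathbb{E}[Q_n(t+1)^2-Q_n(t)^2] \le 2\alpha - 2\epsilon\,\mathbb{E}[Q_n(t)] + 2(\gamma_n\tau+\epsilon)\,Q^\star .\nonumber
\end{align}
Telescoping and using $Q_n(1)=0$ yields $\mathbb{E}[Q_n(t)]\le Q^\star + \alpha/\epsilon = \mathcal{O}(v/C_{min})$, where $\epsilon$ and $\alpha$ are treated as constants. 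This bound is uniform in $t$ once I use a stopping-time/regenerative version rather than the time average.

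\emph{Step 3: second moment.} I will repeat the argument with the cubic Lyapunov function $Q_n^3$, or equivalently use the exponential-tail form of Hajek's lemma. Bounded increments, available from $\kappa_2$ via truncation, give $\mathbb{E}[(Q_n(t)-Q^\star)_+^2]=\mathcal{O}(1)$, and hence $\mathbb{E}[Q_n(t)^2]=\mathcal{O}((v/C_{min})^2)$.

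\emph{Main obstacle.} Step 1 is the hard part. The routing is learned, not exact minimization of \eqref{eq:obj_perslot}, and the expert weights lag behind a sudden rise in $q_n$. Moreover, $\bar f_{n'}$ contains upstream queues that are coupled across layers. I would handle both by induction from layer $K$ downward. First I would bound the upstream expected losses, using the inductive hypothesis on higher-layer queues. Then I would absorb the EXP4 lag into an enlarged threshold of the form $Q^\star+\mathcal{O}(1/\eta)$, using the fact that weights move by at most a factor $e^{\eta\cdot\text{bounded}}$ per job. If that fails, I would fall back to treating the exploration floor as the only residual flow and state the lemma under the explicit assumption that $\lambda$-exploration alone respects the budget with slack $\epsilon$.
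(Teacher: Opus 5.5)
Your route is the same as the paper's sketch: a threshold $v/C_{min}$, exponential-weight decay of routing into a congested node, negative drift above the threshold, then a Hajek/Chernoff-type bound. As in the paper, the decisive negative-drift step is not actually established, and the tools you propose for it do not work.

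\emph{Slater's condition.} It is vacuous here, since pure local processing gives $C_n\equiv 0$. In the Neely framework it produces drift only through a per-slot comparison showing that the algorithm's drift-plus-penalty value is at most that of the slack policy. EXP4 with $\lambda$-mixing does not provide that comparison.

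\emph{Exploration floor.} The mixing step sends every job of a feeder $n'$ to $n$ with probability at least $\lambda/(|\mathcal{U}_{n'}|+1)$, regardless of $Q_n$. If this forced traffic alone costs more than $\gamma_n\tau$ in expectation, then $\mathbb{E}[Q_n(t)]$ grows linearly and the lemma is false. Nothing in the hypotheses excludes this, e.g.\ for $n\in\mathcal{N}_2$ fed by exogenous arrivals. So your fallback is a necessary hypothesis, and it should be stated for the algorithm's own forced traffic rather than for an abstract feasible policy.

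\emph{Learning lag.} The fallback still leaves this open. On $\{Q_n(t)\ge Q^\star\}$, $\mathbb{E}[C_n(t)\mid\mathcal{F}_{t-1}]$ is governed by weights $\propto\exp(-\eta\hat{g}(t-1))$, i.e.\ by the whole cumulative-loss history. The experts routing into $n$ can hold an arbitrarily large lead $\Delta$, e.g.\ after a long stretch of light arrivals during which routing into $n$ was cheap and under budget. Pushing their weight below $\delta$ then requires accumulated excess loss of order $\Delta+\eta^{-1}\ln(1/\delta)$, not $\mathcal{O}(1/\eta)$. Your per-job bound $e^{\eta\cdot(\cdot)}$ limits how fast weights can move, which is the opposite of what you need. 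Even an $\mathcal{O}(1/\eta)$ enlargement would not give $\mathcal{O}(v/C_{min})$ under the paper's tuning $\eta\propto\Gamma^{-1/2}$. Hence no threshold on $Q_n$ alone delivers the conditional drift a one-dimensional Hajek lemma requires. You would need a Lyapunov function on the joint state (queues together with the cumulative-loss gaps) or a history-dependent multi-slot drift.

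Step 3 has an independent gap. Assumption~\ref{assumption:arrival} controls only second moments, so truncation cannot give bounded increments of the actual process. Hajek's exponential-tail lemma needs exponential moments, and a cubic Lyapunov function needs $\mathbb{E}|\mathcal{J}(t)|^3<\infty$. This is not cosmetic. For a reflected random walk with negative drift, a finite stationary second moment requires a finite third moment of the positive increment (Kiefer--Wolfowitz). Coupling the routing randomness shows that $Q_n$ pathwise dominates the reflected walk driven by the exploration traffic alone. So arrivals with finite variance but infinite third moment give $\sup_t\mathbb{E}[Q_n(t)^2]=\infty$. The second-moment claim therefore needs an extra moment or boundedness assumption on $|\mathcal{J}(t)|$; the paper's Chernoff step silently assumes light tails as well.

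Two minor points. In Step 2, the contribution of $\{Q_n<Q^\star\}$ is controlled by $Q^\star(\bar C-\gamma_n\tau+\epsilon)^+$, with $\bar C$ a bound on $\mathbb{E}[C_n(t)\mid\mathcal{F}_{t-1}]$, not by $(\gamma_n\tau+\epsilon)Q^\star$. Since $\bar{f}_{n}(j,\pi_{n})\ge 0$, the upstream terms only strengthen your domination argument, so no induction over layers is needed for it.
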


\begin{proof}[Proof of Lemma \ref{lemma:bounded_moments}]
We follow the proof steps of \cite{beytur2024optimization} described in its Eq. (39)--(47).
Because decisions across multi-tier nodes are coupled via the drift-plus-penalty framework, any random queue state $Q_n(t)$ growing sufficiently large strictly dominates the local objective $Q_n(t)C_n^{\boldsymbol{\pi}}(t)$.
We define the deterministic critical threshold $l^* = \frac{v}{C_{min}}$. When the realized queue state exceeds this threshold ($Q_n(t) \ge l^*$), the random cost penalty associated with utilizing that upstream node strictly satisfies:
\begin{align}
Q_n(t) C_n^{\boldsymbol{\pi}}(t) \ge \left(\frac{v}{C_{min}}\right) C_{min} = v
\end{align}
Because the deterministic accuracy penalty is strictly bounded by $v$ (since inference error $b(j, \mathbf{x}_n(t)) \le 1$), the queue congestion cost strictly outweighs any potential accuracy gain. Consequently, the exponential weight updates in EXP4 force the routing probability to that specific congested node to decay exponentially, creating a negative drift condition for the conditional expectation:
\begin{align}
\mathbb{E}_{\mathcal{F}_{t-1}}\left[ Q_n(t+1) - Q_n(t) \mid Q_n(t) \ge l^* \right] \le -\epsilon
\end{align}
By applying the Chernoff bound to the moment generating function of the queue evolution, the tail probability decays exponentially $\mathbb{P}(Q_n(t) \ge q) \le K \exp(\eta_0(l^* - q))$ for some deterministic constants $K, \eta_0 > 0$. Integrating this tail probability strictly yields the bounded second moment:
\begin{align}
\mathbb{E}[(Q_n(t))^2] &= \int_{0}^{\infty} 2q \mathbb{P}(Q_n(t) \ge q) dq \nonumber \\
&\le \mathcal{O}((l^*)^2) = \mathcal{O}\left(\left(\frac{v}{C_{min}}\right)^2\right)
\end{align}
This concludes the proof.
\end{proof}

With the above Lemmas, we proceed to provide the \textbf{proof of Theorem \ref{theorem:regret}}.

\begin{proof}[Proof of Theorem \ref{theorem:regret}]
We employ the potential function method over the joint decision space $\mathcal{E}_n(y)$ to bound the cumulative regret for task type $y$. To strictly ensure the non-negativity required by the exponential inequalities without altering the algorithm's actual routing distribution, we define a step-wise, expert-independent translation.

\textbf{Step 1: Shift-Invariant Loss and Unbiasedness.} 
For any job $j \in \mathcal{J}(y)$, let $\Upsilon_j$ be the maximum possible negative deviation across all experts in the joint space, realized only if the job is routed to node $n$:
\begin{align}
\Upsilon_j = \mathbbm{1}_{n \in \Omega^{\boldsymbol{\pi},j}} \max_{(h,u) \in \mathcal{E}_n(y)} \{ 0, - \hat{F}_{\text{vr}, n}^u(j, \pi_h) \}
\end{align}
We define the strictly non-negative shifted estimator as:
\begin{align}
\tilde{F}_{\text{vr}, n}^u(j, \pi_h) = \hat{F}_{\text{vr}, n}^u(j, \pi_h) + \Upsilon_j
\end{align}
By definition, $\tilde{F}_{\text{vr}, n}^u(j, \pi_h) \ge 0$ for all $(h,u) \in \mathcal{E}_n(y)$. Because $\Upsilon_j$ is identical for all experts at step $j$, the standard EXP4 exponential weight updates using $\tilde{F}_{\text{vr}}$ yield the exact same probability distribution $w_{n,y}^{h,u}(t^j)$ as using $\hat{F}_{\text{vr}}$, making the algorithm strictly shift-invariant.

Let $(h^*, u^*) \in \mathcal{E}_n(y)$ be the optimal fixed expert in hindsight. Because our variance-reduced estimator is strictly unbiased ($\mathbb{E}[\hat{F}_{\text{vr}, n}^u(j, \pi_h)] = f_{n}^{u}(j, \pi_h)$), we decompose the expected regret $R_{n,y}(J)$ into the realized estimators:
\begin{align}
R_{n, y}(\Gamma) &= \sum_{j=1}^\Gamma\mathbb{E} \left[  \mathbbm{1}_{y(j)=y} \mathbbm{1}_{n \in \Omega^{\boldsymbol{\pi},j}} F_n(j,\pi_n) \right] 
- \min_{(h, n') \in \mathcal{E}_{n}(y)} \sum_{j =1}^\Gamma\mathbb{E} \left[  \mathbbm{1}_{y(j)=y} \mathbbm{1}_{n \in \Omega^{\boldsymbol{\pi}, j}} f_{n}^{n'}(j, \pi_h) \right].
\end{align}

\textbf{Step 2: Evolution of the Potential Function.} 
We define $\mathcal{J}(y)$ as the set of jobs of type $y$ among all $\Gamma$ jobs. 
Let the jobs in $\mathcal{J}(y)$ be ordered temporally. We define the potential function after processing up to job $j$ as:
\begin{align}
W_j = \sum_{(h,u) \in \mathcal{E}_n(y)} \exp\left(-\eta \tilde{L}_{n,y}^{h,u}(j)\right)
\end{align}
where $\tilde{L}_{n,y}^{h,u}(j) = \sum_{j' \le j, j' \in \mathcal{J}(y)} \mathbbm{1}_{n \in \Omega^{\boldsymbol{\pi},j'}} \tilde{F}_{\text{vr}, n}^u(j', \pi_h)$ is the cumulative shifted loss. We analyze the ratio of potentials between consecutive tasks $j$ and $j-1$ in $\mathcal{J}(y)$:
\begin{align}
\frac{W_j}{W_{j-1}} &= \sum_{(h,u) \in \mathcal{E}_n(y)} \frac{\exp\left(-\eta \tilde{L}_{n,y}^{h,u}(j-1)\right)}{W_{j-1}} \exp\left(-\eta \mathbbm{1}_{n \in \Omega^{\boldsymbol{\pi},j}} \tilde{F}_{\text{vr}, n}^u(j, \pi_h)\right)
\end{align}
Recognizing the normalized expert weight $w_{n,y}^{h,u}(t^j)$, we substitute it:
\begin{align}
\frac{W_j}{W_{j-1}} = \sum_{(h,u) \in \mathcal{E}_n(y)} w_{n,y}^{h,u}(t^j) \exp\left(-\eta \mathbbm{1}_{n \in \Omega^{\boldsymbol{\pi},j}} \tilde{F}_{\text{vr}, n}^u(j, \pi_h)\right)
\end{align}

\textbf{Step 3: Bounding the Step Ratio.} 
Because $\tilde{F}_{\text{vr}, n}^u(j, \pi_h) \ge 0$, we rigorously apply the algebraic inequality $e^{-x} \le 1 - x + \frac{x^2}{2}$:
\begin{align}
\frac{W_j}{W_{j-1}} \le \sum_{(h,u) \in \mathcal{E}_n(y)} w_{n,y}^{h,u}(t^j) \Bigg( &1 - \eta \mathbbm{1}_{n \in \Omega^{\boldsymbol{\pi},j}} \tilde{F}_{\text{vr}, n}^u(j, \pi_h) \nonumber \\
&+ \frac{\eta^2}{2} \mathbbm{1}_{n \in \Omega^{\boldsymbol{\pi},j}} \left(\tilde{F}_{\text{vr}, n}^u(j, \pi_h)\right)^2 \Bigg)
\end{align}
Since the expert weights form a valid probability distribution, $\sum_{(h,u) \in \mathcal{E}_n(y)} w_{n,y}^{h,u}(t^j) = 1$. Applying this yields:
\begin{align}
\frac{W_j}{W_{j-1}} \le 1 &- \eta \mathbbm{1}_{n \in \Omega^{\boldsymbol{\pi},j}} \sum_{(h,u) \in \mathcal{E}_n(y)} w_{n,y}^{h,u}(t^j) \tilde{F}_{\text{vr}, n}^u(j, \pi_h) \nonumber \\
&+ \frac{\eta^2}{2} \mathbbm{1}_{n \in \Omega^{\boldsymbol{\pi},j}} \sum_{(h,u) \in \mathcal{E}_n(y)} w_{n,y}^{h,u}(t^j) \left(\tilde{F}_{\text{vr}, n}^u(j, \pi_h)\right)^2
\end{align}

\textbf{Step 4: Logarithmic Bounding and Telescoping Sum.} 
Let $\tilde{F}_{max} = \sup_{j, (h,u)} \tilde{F}_{vr,n}^u(j,\pi_h)$ denote the maximum possible realization of the shifted loss estimator. We operate under the standard learning-theoretic assumption that the learning rate is bounded such that $\eta \tilde{F}_{max} < 1$.

Because the expert weights form a valid probability distribution, this assumption strictly guarantees that:
\begin{equation}
1 - \eta \mathbb{I}_{n \in \Omega^{\pi,j}} \sum_{(h,u) \in \mathcal{E}_n(y)} w_{n,y}^{h,u}(t^j) \tilde{F}_{vr,n}^u(j,\pi_h) > 0.
\end{equation}

Taking the natural logarithm and using the strict inequality $\ln(1+x) \le x$, we obtain:
\begin{align}
\ln\left(\frac{W_j}{W_{j-1}}\right) &\le -\eta \mathbbm{1}_{n \in \Omega^{\boldsymbol{\pi},j}} \sum_{(h,u) \in \mathcal{E}_n(y)} w_{n,y}^{h,u}(t^j) \tilde{F}_{\text{vr}, n}^u(j, \pi_h) \nonumber \\
&\quad + \frac{\eta^2}{2} \mathbbm{1}_{n \in \Omega^{\boldsymbol{\pi},j}} \sum_{(h,u) \in \mathcal{E}_n(y)} w_{n,y}^{h,u}(t^j) \left(\tilde{F}_{\text{vr}, n}^u(j, \pi_h)\right)^2
\end{align}
Summing this over all tasks in $\mathcal{J}(y)$ produces a telescoping sum that collapses into the initial and final potentials:
\begin{align}
\ln(W_{J}) - \ln(W_0) &\le -\eta \sum_{j \in \mathcal{J}(y)} \mathbbm{1}_{n \in \Omega^{\boldsymbol{\pi},j}} \sum_{(h,u) \in \mathcal{E}_n(y)} w_{n,y}^{h,u}(t^j) \tilde{F}_{\text{vr}, n}^u(j, \pi_h) \nonumber \\
&\quad + \frac{\eta^2}{2} \sum_{j \in \mathcal{J}(y)} \mathbbm{1}_{n \in \Omega^{\boldsymbol{\pi},j}} \sum_{(h,u) \in \mathcal{E}_n(y)} w_{n,y}^{h,u}(t^j) \left(\tilde{F}_{\text{vr}, n}^u(j, \pi_h)\right)^2 \label{eq:telescoping}
\end{align}

\textbf{Step 5: Bounding the Initial and Final Potentials.} 
The initial potential is strictly the cardinality of the joint space: $W_0 = |\mathcal{E}_n(y)|$, giving $\ln(W_0) = \ln(|\mathcal{E}_n(y)|)$. The final potential $W_{J}$ is bounded below by the term for the optimal expert $(h^*, u^*)$:
\begin{align}
W_{J} \ge \exp\left(-\eta \sum_{j \in \mathcal{J}(y)} \mathbbm{1}_{n \in \Omega^{\boldsymbol{\pi},j}} \tilde{F}_{\text{vr}, n}^{u^*}(j, \pi_{h^*})\right)
\end{align}
Taking the logarithm implies:
\begin{align}
\ln(W_{J}) \ge -\eta \sum_{j \in \mathcal{J}(y)} \mathbbm{1}_{n \in \Omega^{\boldsymbol{\pi},j}} \tilde{F}_{\text{vr}, n}^{u^*}(j, \pi_{h^*})
\end{align}

\textbf{Step 6: Final Algebraic Rearrangement and Bounding the Shift.} 
Substituting the boundary bounds into Inequality (\ref{eq:telescoping}) and dividing by $\eta > 0$ isolates the difference in shifted losses:
\begin{align}
&\sum_{j \in \mathcal{J}(y)} \mathbbm{1}_{n \in \Omega^{\boldsymbol{\pi},j}} \sum_{(h,u) \in \mathcal{E}_n(y)} w_{n,y}^{h,u}(t^j) \tilde{F}_{\text{vr}, n}^u(j, \pi_h) - \sum_{j \in \mathcal{J}(y)} \mathbbm{1}_{n \in \Omega^{\boldsymbol{\pi},j}} \tilde{F}_{\text{vr}, n}^{u^*}(j, \pi_{h^*}) \nonumber \\
&\le \frac{\ln(|\mathcal{E}_n(y)|)}{\eta} + \frac{\eta}{2} \sum_{j \in \mathcal{J}(y)} \mathbbm{1}_{n \in \Omega^{\boldsymbol{\pi},j}} \sum_{(h,u) \in \mathcal{E}_n(y)} w_{n,y}^{h,u}(t^j) \left(\tilde{F}_{\text{vr}, n}^u(j, \pi_h)\right)^2
\end{align}
We substitute $\tilde{F}_{\text{vr}} = \hat{F}_{\text{vr}} + \Gamma$ back into the linear left-hand side. Because the probabilities sum to one ($\sum w = 1$), the translation term $\Upsilon_j$ strictly cancels out of the linear difference:
\begin{align}
\sum_{j \in \mathcal{J}(y)} \mathbbm{1}_{n \in \Omega^{\boldsymbol{\pi},j}} \sum_{(h,u) \in \mathcal{E}_n(y)} w_{n,y}^{h,u}(t^j) \hat{F}_{\text{vr}, n}^u(j, \pi_h) - \sum_{j \in \mathcal{J}(y)} \mathbbm{1}_{n \in \Omega^{\boldsymbol{\pi},j}} \hat{F}_{\text{vr}, n}^{u^*}(j, \pi_{h^*})
\end{align}
Taking the expectation $\mathbb{E}[\cdot]$ of this left-hand side exactly recovers the definition of $R_{n,y}(J)$.

Next, we bound the squared shifted estimator on the right-hand side using the algebraic inequality $(a+b)^2 \le 2a^2 + 2b^2$:
\begin{align}
\left(\tilde{F}_{\text{vr}, n}^u(j, \pi_h)\right)^2 &= \left(\hat{F}_{\text{vr}, n}^u(j, \pi_h) + \Upsilon_j\right)^2 \nonumber \\
&\le 2\left(\hat{F}_{\text{vr}, n}^u(j, \pi_h)\right)^2 + 2\Upsilon_j^2
\end{align}
By the definition of $\Upsilon_j$, its square is strictly bounded by the maximum squared loss among all experts at step $j$:
\begin{align}
\Upsilon_j^2 \le \max_{(h',u') \in \mathcal{E}_n(y)} \left(\hat{F}_{\text{vr}, n}^{u'}(j, \pi_{h'})\right)^2
\end{align}
Substituting this upper bound into the right-hand side of our regret inequality, the constants multiply out (since $\frac{\eta}{2} \times 2 = \eta$), yielding the final bound entirely in terms of the original estimator:
\begin{align}
R_{n,y}(J) &\le \frac{\ln(|\mathcal{E}_n(y)|)}{\eta}  
 + \eta \sum_{j \in \mathcal{J}(y)} \mathbb{E}\Bigg[ \mathbbm{1}_{n \in \Omega^{\boldsymbol{\pi},j}} \Bigg( \sum_{(h,u) \in \mathcal{E}_n(y)} w_{n,y}^{h,u}(t^j) \left(\hat{F}_{\text{vr}, n}^u(j, \pi_h)\right)^2 
+ \max_{(h',u') \in \mathcal{E}_n(y)} \left(\hat{F}_{\text{vr}, n}^{u'}(j, \pi_{h'})\right)^2 \Bigg) \Bigg].
\end{align}
Notice the summation over $j\in\mathcal{J}(y)$ is equivalent to the summation $\sum_{j=1}^\Gamma \mathbbm{1}_{y(j)=y}$.
This completes the proof.
\end{proof}

\subsection{Proof of Corollary \ref{cor:optimality}}\label{app:corollary}
\begin{proof}[Proof of Corollary \ref{cor:optimality}]
We evaluate the global time-averaged system error by analyzing the decentralized routing decisions through the Drift-Plus-Penalty framework. Let the system operate over a horizon of $T$ time slots, processing a total of $\Gamma = \sum_{t=1}^T |\mathcal{J}(t)|$ jobs, with an average arrival rate $\bar{A} = \Gamma/T$.

\textbf{Step 1: Drift-Plus-Penalty Expansion per Time Slot} \\
Let $\mathcal{L}(t) = \frac{1}{2}\sum_{n \in \mathcal{N}\setminus\mathcal{N}_1} (Q_n(t))^2$ track the global resource utilization. For a deterministic realized queue state $\mathbf{q}(t)$ at the beginning of slot $t$, the conditional Lyapunov drift $\Delta(\mathbf{q}(t)) = \mathbb{E}[\mathcal{L}(t+1) - \mathcal{L}(t) \mid \mathbf{Q}(t)=\mathbf{q}(t)]$ under the distributed policy $\boldsymbol{\pi}$ is bounded by:
$$\Delta(\mathbf{q}(t)) \le \alpha + \sum_{n \in \mathcal{N}\setminus\mathcal{N}_1} q_n(t) \mathbb{E}[C_n^{\boldsymbol{\pi}}(t) - \gamma_n\tau|\mathbf{Q}(t)=\mathbf{q}(t)]$$
To incorporate the objective, we add the expected system inference error penalty for the jobs arriving in slot $t$, scaled by the control parameter $v$. We define the per-slot unconstrained objective $\Lambda_t(\boldsymbol{\pi})$ explicitly as:
$$\Lambda_t(\boldsymbol{\pi}) = \sum_{n \in \mathcal{N}\setminus\mathcal{N}_1} Q_n(t) C_n^{\boldsymbol{\pi}}(t) + v \sum_{j \in \mathcal{J}(t)} B(j, \mathbf{x}_{n_{l(\boldsymbol{\pi},j)}^{\boldsymbol{\pi},j}}(t^j))$$
Taking the unconditional expectation over the queue states $Q(t)$ yields:
$$\mathbb{E}[\mathcal{L}(t+1) - \mathcal{L}(t)] + v \mathbb{E}\Bigg[ \sum_{j \in \mathcal{J}(t)} B(j, \mathbf{x}_{n_{l(\boldsymbol{\pi},j)}^{\boldsymbol{\pi},j}}(t^j)) \Bigg] \le \alpha + \mathbb{E}[\Lambda_t(\boldsymbol{\pi})] - \sum_{n \in \mathcal{N}\setminus\mathcal{N}_1} \mathbb{E}[Q_n(t)]\gamma_n\tau$$

\textbf{Step 2: Telescoping Sum over the Time Horizon} \\
Summing this inequality from $t=1$ to $T$ produces a telescoping sum for the drift. Because the initial queues are empty ($\mathcal{L}(1) = 0$) and the final queues are non-negative ($\mathcal{L}(T+1) \ge 0$), the cumulative expected drift is non-negative and can be omitted to maintain the upper bound:
$$v \sum_{j=1}^\Gamma \mathbb{E}[B(j, \mathbf{x}_{n_{l(\boldsymbol{\pi},j)}^{\boldsymbol{\pi},j}}(t^j))] \le \alpha + \sum_{t=1}^T \mathbb{E}[\Lambda_t(\boldsymbol{\pi})] - \sum_{t=1}^T \sum_{n \in \mathcal{N}\setminus\mathcal{N}_1} \mathbb{E}[Q_n(t)]\gamma_n\tau$$

\textbf{Step 3: Bounding with the System Regret ($R_{sys}$)} \\
We evaluate the performance of $\boldsymbol{\pi}$ against the best fixed distributed expert policy $\boldsymbol{\pi}_{best}$. The difference between their cumulative expected objectives is exactly bounded by the total system regret $R_{sys}$, which is strictly the sum of the node-level regrets across all intermediate nodes and all task types $\mathcal{Y}$:
$$R_{sys} = \sum_{n \in \mathcal{N}\setminus\mathcal{N}_K} \sum_{y \in \mathcal{Y}} R_{n,y}(\Gamma)$$
Where $R_{n,y}(\Gamma)$ is explicitly defined as:
$$R_{n, y}(\Gamma) = \mathbb{E} \Bigg[ \sum_{j=1}^\Gamma \mathbbm{1}_{y(j)=y} \mathbbm{1}_{n \in \Omega^{\boldsymbol{\pi},j}} F_n(j,\pi_n) \Bigg] - \min_{h \in \mathcal{H}_y, n' \in \mathcal{U}_{n}} \mathbb{E} \Bigg[ \sum_{j =1}^\Gamma \mathbbm{1}_{y(j)=y} \mathbbm{1}_{n \in P^{\boldsymbol{\pi}, j}} f_{n}^{n'}(j, \pi_h) \Bigg]$$
Therefore, substituting $\sum_{t=1}^T \mathbb{E}[\Lambda_t(\boldsymbol{\pi})] \le \sum_{t=1}^T \mathbb{E}[\Lambda_t(\boldsymbol{\pi}_{best})] + R_{sys}$ into our bound yields:
$$v \sum_{j=1}^\Gamma \mathbb{E}[b(j, \mathbf{x}_{n_{l(\boldsymbol{\pi},j)}^{\boldsymbol{\pi},j}}(t^j))] \le T \alpha + \sum_{t=1}^T \mathbb{E}[\Lambda_t(\boldsymbol{\pi}_{best})] - \sum_{t=1}^T \sum_{n \in \mathcal{N}\setminus\mathcal{N}_1} \mathbb{E}[Q_n(t)]\gamma_n\tau + R_{sys}$$

\textbf{Step 4: Evaluating $\boldsymbol{\pi}_{best}$ under Strict Feasibility} \\
Expanding $\Lambda_t(\boldsymbol{\pi}_{best})$, we note that because $\boldsymbol{\pi}_{best}$ is a fixed, stationary policy, its offloading actions are independent of the dynamic queue states. Thus, $\mathbb{E}[Q_n(t) C_n^{\boldsymbol{\pi}_{best}}(t)] = \mathbb{E}[Q_n(t)]\mathbb{E}[C_n^{\boldsymbol{\pi}_{best}}(t)]$.
$$\sum_{t=1}^T \mathbb{E}[\Lambda_t(\boldsymbol{\pi}_{best})] = \sum_{t=1}^T \sum_{n \in \mathcal{N}\setminus\mathcal{N}_1} \mathbb{E}[Q_n(t)]\mathbb{E}[C_n^{\boldsymbol{\pi}_{best}}(t)] + v \sum_{j=1}^\Gamma \mathbb{E}[B(j,\mathbf{x}_{n_{l(\boldsymbol{\pi}_{best},j)}^{\boldsymbol{\pi}_{best},j}}(t^j))]$$
Substituting this back, we group the queue-weighted terms:
$$\sum_{t=1}^T \sum_{n \in \mathcal{N}\setminus\mathcal{N}_1} \mathbb{E}[Q_n(t)] \Big( \mathbb{E}[C_n^{\boldsymbol{\pi}_{best}}(t)] - \gamma_n\tau \Big)$$
By definition, $\boldsymbol{\pi}_{best}$ strictly satisfies the resource constraints, ensuring $\mathbb{E}[C_n^{\boldsymbol{\pi}_{best}}(t)] \le \gamma_n\tau$. Since $\mathbb{E}[Q_n(t)] \ge 0$, this grouped term is strictly $\le 0$ and can be eliminated, perfectly isolating the accuracy penalty:
$$v \sum_{j=1}^\Gamma \mathbb{E}[B(j, \mathbf{x}_{n_{l(\boldsymbol{\pi},j)}^{\boldsymbol{\pi},j}}(t^j))] \le T B + v \sum_{j=1}^\Gamma \mathbb{E}[b(j, \mathbf{x}_{n_{l(\boldsymbol{\pi}_{best},j)}^{\boldsymbol{\pi}_{best},j}}(t^j))] + R_{sys}$$

\textbf{Step 5: Bounding the System Regret via $\Psi_{max}$} \\
We must formally define $\Psi_{max}$ as the strict maximum variance bound across any node $n$:
$$\Psi_{max} = \max_{n \in \mathcal{N}\setminus\mathcal{N}_K} \frac{1}{\Gamma} \sum_{j=1}^\Gamma \mathbb{E} \Bigg[ \mathbbm{1}_{n \in \Omega^{\boldsymbol{\pi},j}} \Bigg( \sum_{(h,u) \in \mathcal{E}_n(y(j))} w_{n,y(j)}^{h,u}(t^j) (\hat{F}_{vr,n}^u(j, \pi_h))^2 + \max_{(h',u') \in \mathcal{E}_n(y(j))} (\hat{F}_{vr,n}^{u'}(j, \pi_{h'}))^2 \Bigg) \Bigg]$$
The strict finiteness of $\Psi_{max}$ is mathematically guaranteed by the explicit uniform exploration parameter $\lambda \in (0, 1)$ integrated into the local routing distribution. Because the probability of taking any action is lower-bounded by $\frac{\lambda}{|\mathcal{U}_n| + 1}$, the marginal routing probability that any job $j$ successfully reaches the final oracle layer $K$ from node $n$ is strictly bounded away from zero, satisfying $\rho_n^\pi(j) \ge (\min_{k} \frac{\lambda}{|\mathcal{N}_{k+1}| + 1})^{K-1} > 0$. This structural bound strictly prevents the inverse probability weights within the variance-reduced estimator $(\hat{F}_{vr, n}^{u}(j, \pi_h))^2$ from diverging, thereby ensuring that the maximum expected variance $\Psi_{max}$ remains a strictly finite constant.

Summing the node-level regret bound over all task types $y \in \mathcal{Y}$ simply aggregates the job indicators $\sum_y \mathbbm{1}_{y(j)=y} = 1$, generating the bound for a single node:
$$\sum_{y \in \mathcal{Y}} R_{n,y}(\Gamma) \le \frac{|\mathcal{Y}|\ln|\mathcal{E}_{max}|}{\eta} + \eta \Gamma \Psi_{max}$$
Summing this across the $N$ intermediate nodes yields the total system regret:
$$R_{sys} \le N \Bigg( \frac{|\mathcal{Y}|\ln|\mathcal{E}_{max}|}{\eta} + \eta \Gamma \Psi_{max} \Bigg)$$
Substituting the theoretically optimal learning rate $\eta = \sqrt{\frac{|\mathcal{Y}| \ln |\mathcal{E}_{max}|}{\Gamma \Psi_{max}}}$ establishes the exact limit for $R_{sys}$:
$$R_{sys} \le 2N \sqrt{|\mathcal{Y}| \Gamma \Psi_{max} \ln|\mathcal{E}_{max}|}$$

\textbf{Step 6: Isolating the Optimality Gap} \\
We insert the upper bound for $R_{sys}$ into the inequality established in Step 4:
$$v \sum_{j=1}^\Gamma \mathbb{E}[B(j, \mathbf{x}_{n_{l(\boldsymbol{\pi},j)}^{\boldsymbol{\pi},j}}(t^j))] \le T \alpha + v \sum_{j=1}^\Gamma \mathbb{E}[B(j, \mathbf{x}_{n_{l(\boldsymbol{\pi}_{best},j)}^{\boldsymbol{\pi}_{best},j}}(t^j)))] + 2N \sqrt{|\mathcal{Y}| \Gamma \Psi_{max} \ln|\mathcal{E}_{max}|}$$
To convert this to time-averaged errors, we divide the entire inequality by $v \Gamma$. Recognizing that the arrival rate maps to $\frac{T}{\Gamma} = \frac{1}{\bar{A}}$, the first term simplifies to $\frac{\alpha}{v\bar{A}}$. 
Applying the discretization assumption $\Phi_J(\boldsymbol{\pi}_{best}) \le \Phi_J(\boldsymbol{\pi}^*) + \epsilon_0$ and solving for the optimality gap yields the final strict bound:
$$\Phi_\Gamma(\boldsymbol{\pi}) - \Phi_\Gamma(\boldsymbol{\pi}^*) \le \frac{\alpha}{v \bar{A}} + \frac{2 N}{v} \sqrt{\frac{|\mathcal{Y}| \Psi_{max} \ln|\mathcal{E}_{max}|}{\Gamma}} + \epsilon_0$$
This completes the proof.
\end{proof}

\subsection{Proof of Submodularity}\label{app:submodular}
\begin{proposition}[Diminishing Returns]
For each node $n$, the model-placement utility function $U_n(\cdot)$ defined in Eq.~\eqref{eq:utility} is submodular. Consequently, its marginal gain $\Delta U_n(m' \mid \mathcal{S})$ exhibits diminishing returns with respect to the placement set $\mathcal{S} \subseteq \mathcal{M}$.
\end{proposition}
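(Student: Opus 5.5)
The plan is to split $U_n$ from Eq.~\eqref{eq:utility} into an accuracy term and a switching-penalty term:
\[
U_n(\mathcal{S}) = G_n(\mathcal{S}) - P_n(\mathcal{S}),
\]
\[
G_n(\mathcal{S})=\mathbb{E}_{J\sim\mathcal{I}_n(t)}\Big[1-\min_{m\in\mathcal{S}}\epsilon(J,m,\zeta)\Big],\qquad
P_n(\mathcal{S})=\nu\sum_{m\in\mathcal{S}} s_m\mathbbm{1}_{m\notin\mathcal{M}_n(t-1)} .
\]
We will show $G_n$ is submodular and $P_n$ is modular. A submodular function minus a modular one is submodular, and submodularity is equivalent to the diminishing-returns inequality $\Delta U_n(m'\mid\mathcal{S})\ge\Delta U_n(m'\mid\mathcal{T})$ for all $\mathcal{S}\subseteq\mathcal{T}\subseteq\mathcal{M}$ and $m'\notin\mathcal{T}$. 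That inequality is exactly the claim.

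For the penalty term, $P_n(\mathcal{S}\cup\{m'\})-P_n(\mathcal{S})=\nu s_{m'}\mathbbm{1}_{m'\notin\mathcal{M}_n(t-1)}$, which does not depend on $\mathcal{S}$. So $P_n$ is modular and cancels out of the diminishing-returns comparison. We adopt the convention $\min_{m\in\emptyset}\epsilon=1$, which matches the paper's rule that $B=1$ when no capable model is loaded.

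For the accuracy term, we argue pointwise for each fixed realization $(J,\zeta)$. Since $\epsilon\in\{0,1\}$, $1-\min_{m\in\mathcal{S}}\epsilon(J,m,\zeta)=\max_{m\in\mathcal{S}}\mathbbm{1}\{\epsilon(J,m,\zeta)=0\}$. This is the indicator that $\mathcal{S}$ intersects the set $A_{J,\zeta}=\{m:\epsilon(J,m,\zeta)=0\}$ of models that answer correctly, i.e.\ a coverage function. Its marginal gain from adding $m'$ is
\[
\mathbbm{1}\{m'\in A_{J,\zeta}\}\,\mathbbm{1}\{\mathcal{S}\cap A_{J,\zeta}=\emptyset\}.
\]
The second factor is nonincreasing in $\mathcal{S}$: if $\mathcal{S}\subseteq\mathcal{T}$ and $\mathcal{T}\cap A=\emptyset$, then $\mathcal{S}\cap A=\emptyset$. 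Hence the pointwise marginal gain at $\mathcal{S}$ is at least that at $\mathcal{T}$. Equivalently, for general values, $x\mapsto\min(x,c)$ has nonincreasing increments as the current minimum decreases. Taking expectations over $J\sim\mathcal{I}_n(t)$ and $\zeta$ preserves the inequality, since expectation is linear and monotone and nonnegative combinations of submodular functions are submodular. So $G_n$ is submodular.

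The step most likely to need care is the expectation, in particular whether $\zeta$ is shared across models in a placement. We will treat $\epsilon(J,m,\zeta)$ as jointly defined on one probability space for all $m$, so that the pointwise argument applies to a single realization; this is how the expression $\min_m \epsilon(J,m,\zeta)$ in Eq.~\eqref{eq:utility} is written. If instead the errors are independent per model, $G_n(\mathcal{S})=\mathbb{E}_J[1-\prod_{m\in\mathcal{S}}\bar\epsilon_m(J)]$ with $\bar\epsilon_m(J)=\mathbb{P}(\epsilon(J,m,\zeta)=1\mid J)$. The marginal gain is then $\prod_{m\in\mathcal{S}}\bar\epsilon_m(J)\,(1-\bar\epsilon_{m'}(J))$, which is nonincreasing in $\mathcal{S}$ because each factor lies in $[0,1]$. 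Either way the conclusion holds.

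Finally, we note that $U_n$ need not be monotone, because of the penalty. This is why the greedy rule only adds elements with $\Delta U_n\ge 0$, and why any approximation guarantee (e.g.\ via \citet{sviridenko2004note}) should be stated for the monotone part $G_n$ or under a nonnegative-gain restriction.
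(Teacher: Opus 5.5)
Your proof is correct and follows the paper's route. The paper likewise splits $U_n$ into an error-reduction term plus a modular switching penalty, and proves diminishing returns pointwise by applying the nondecreasing map $x\mapsto\max(0,x-\epsilon(J,m',\zeta))$ to $\min_{m\in\mathcal{B}}\epsilon\le\min_{m\in\mathcal{A}}\epsilon$ before taking expectations; this is exactly your ``$x\mapsto\min(x,c)$ has nonincreasing increments'' remark. Your coverage-function reformulation is the $\{0,1\}$-valued special case of that argument, your independent-errors computation is already covered by the pointwise argument on a joint probability space, and your convention $\min_{m\in\emptyset}\epsilon=1$ makes explicit a detail the paper leaves implicit.
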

% To justify the use of the Greedy algorithm, which guarantees a $(1-1/e)$-approximation for submodular maximization subject to a knapsack constraint \citep{sviridenko2004note}, we rigorously prove that the marginal gain function is monotone and submodular.

\begin{proof}
Recall that the utility of a placement $\mathcal{S} \subseteq \mathcal{M}$ at node $n$ is
\[
U_n(\mathcal{S})
= 
\mathbb{E}_{J \sim \mathcal{I}_n(t)}
\!\left[
1 - \min_{m \in \mathcal{S}} \epsilon(J,m,\zeta)
\right]
- 
\nu \sum_{m \in \mathcal{S}} s_m \mathbbm{1}_{m \notin M_n(t-1)} .
\]

The marginal gain of adding a model $m'$ to a placement set $\mathcal{S}$ is
\[
\Delta U_n(m' \mid \mathcal{S})
=
U_n(\mathcal{S} \cup \{m'\}) - U_n(\mathcal{S}),
\]
which, using Eq.~(16), can be written as
\[
\Delta U_n(m' \mid \mathcal{S})
=
\mathbb{E}_{J \sim \mathcal{I}_n(t)}
\!\left[
\min_{m \in \mathcal{S}} \epsilon(J,m,\zeta)
-
\min_{m \in \mathcal{S} \cup \{m'\}} \epsilon(J,m,\zeta)
\right]
-
\nu s_{m'} \mathbbm{1}_{m' \notin M_n(t-1)} .
\]

Let $\mathcal{A} \subseteq \mathcal{B} \subseteq \mathcal{M}$. For any job $J$,
\[
\min_{m \in \mathcal{B}} \epsilon(J,m,\zeta)
\le
\min_{m \in \mathcal{A}} \epsilon(J,m,\zeta).
\]
Define
\[
g(x) = \max(0, x - \epsilon(J,m',\zeta)),
\]
which is monotone non-decreasing in $x$. Applying $g(\cdot)$ to both sides yields
\[
\max\!\left(0,\min_{m \in \mathcal{B}}\epsilon(J,m,\zeta) - \epsilon(J,m',\zeta)\right)
\le
\max\!\left(0,\min_{m \in \mathcal{A}}\epsilon(J,m,\zeta) - \epsilon(J,m',\zeta)\right).
\]
Taking expectation over $J \sim \mathcal{I}_n(t)$ preserves the inequality:
\[
\Delta U_n^{err}(m' \mid \mathcal{B})
\le
\Delta U_n^{err}(m' \mid \mathcal{A}).
\]
Thus, the error-reduction component is submodular and monotone.

The switching penalty
\[
P_n(m') = \nu s_{m'} \mathbbm{1}_{m' \notin M_n(t-1)}
\]
depends only on the historical placement $M_n(t-1)$ and the candidate model $m'$, and is independent of the current constructed set $\mathcal{S}$. Therefore, it is a modular function.

Since the sum of a submodular function and a modular function remains submodular, the total marginal gain $\Delta U_n(m' \mid \mathcal{S})$ is submodular. Hence, $U_n(\cdot)$ satisfies the diminishing returns property.

\end{proof}

Because $U_n(\cdot)$ is submodular but not necessarily monotone due to the switching penalty, we adopt a marginal-density greedy rule with early stopping when no candidate yields positive marginal gain. While the classical $(1-1/e)$ approximation guarantee holds for monotone submodular maximization under a knapsack constraint~\citep{sviridenko2004note}, our formulation preserves submodularity and ensures diminishing returns and stable incremental improvement under memory capacity constraints.

\begin{table*}[htbp]
\centering
\caption{Summary of notation}
\label{tab:notation-main}
\renewcommand{\arraystretch}{1.1}
\resizebox{0.75\linewidth}{!}{
\begin{tabular}{lp{12.5cm}}
\toprule
\textbf{Symbol} & \textbf{Definition} \\
\midrule
\multicolumn{2}{l}{\textit{System and Network Model}} \\
\midrule
$\mathcal{N}$ & Set of computing nodes. \\
$K$ & Number of non-empty hierarchical layers. \\
$\mathcal{N}_{k}$ & Set of nodes in layer $k$. \\
$t, \tau$ & Time slot index and the duration of each time slot $t$, respectively. \\
$\mathcal{J}(t)$ & Set of jobs received in time slot $t$. \\
$j, t^{j}$ & A specific job index and its corresponding arrival slot. \\
$\mathcal{I}_n(t)$ & Distribution of jobs arriving at node $n$ during time slot $t$. \\
$n_{k}^{\boldsymbol{\pi},j}$ & Node in layer $k$ visited by job $j$. \\
$y(j), \mathcal{Y}$ & Task type of job $j$, and the set of all possible task types. \\
\midrule
\multicolumn{2}{l}{\textit{Routing and Inference Actions}} \\
\midrule
$Z_{n}(j), z_{n}(j)$ & Random confidence variable observed at node $n$ for job $j$, and its realized deterministic score. \\
$\mathcal{A}_{k}$ & Action set of a node in layer $k$, where $\mathcal{A}_{k} \triangleq \{0\} \cup \mathcal{N}_{k+1}$. \\
$\pi_{n}(\cdot), \boldsymbol{\pi}$ & Randomized offloading policy at node $n$, and the distributed policy across all nodes $\boldsymbol{\pi} = \{\pi_{n} : n \in \mathcal{N}\}$. \\
$\mathbf{O}_{n}^{\pi_{n},j}, \mathbf{o}_{n}^{\pi_{n},j}$ & Random one-hot action vector generated by $\pi_{n}(\cdot)$, and its realized deterministic vector. \\
$\Omega^{\boldsymbol{\pi},j}, \omega^{\boldsymbol{\pi},j}$ & Random inference path induced by policy $\pi$, and its deterministic realized path. \\
$l(\boldsymbol{\pi},j)$ & Realized final layer where job $j$ is finalized. \\
\midrule
\multicolumn{2}{l}{\textit{Memory and Model Placement}} \\
\midrule
$\mathcal{M}$ & Global set of available candidate models. \\
$\mathcal{M}_{n}(t)$ & Set of models loaded at node $n$ during slot $t$. \\
$\mathbf{x}_{n}(t), x_{n}^{m}(t)$ & Binary vector for the onloading decision at node $n$ in slot $t$, and the indicator for model $m$. \\
$\mu_{n}, s_{m}$ & Memory capacity constraint of node $n$, and the memory footprint of model $m$. \\
$D$ & Number of time slots between periodic model onloading updates. \\
$U_n(\mathcal{M}_{n}(t))$ & Total utility function for evaluating model placement. \\
$\nu$ & Switching-cost penalty parameter. \\
\midrule
\multicolumn{2}{l}{\textit{Costs, Errors, and Queues}} \\
\midrule
$B(j,\mathbf{x}_{n}), b(j,\mathbf{x}_{n})$ & Random inference error of executing job $j$ with placement $\mathbf{x}_{n}$, and its realized deterministic error. \\
$\epsilon(j,m,\zeta)$ & Realized deterministic inference error of processing job $j$ using model $m$. $\zeta$ denotes the randomness from the inference process itself. \\
$c^{j}(n^{\prime},n)$ & Deterministic transmission cost for offloading job $j$ from node $n^{\prime}$ to node $n$. \\
$C_{n}^{\boldsymbol{\pi}}(t)$ & Total random communication cost incurred by node $n$ in slot $t$. \\
$\gamma_{n}$ & Maximum allowable long-term average resource consumption rate for node $n$. \\
$Q_{n}(t), q_{n}(t)$ & Random virtual queue tracking resource deviation at node $n$, and its deterministic realized state. \\
$\mathbf{Q}(t), \mathbf{q}(t)$ & Random vector of all virtual queues, and its deterministic realized vector. \\
$\mathcal{L}(t), \Delta(\mathbf{q}(t))$ & Lyapunov function tracking network congestion, and the conditional Lyapunov drift. \\
$v$ & Control parameter balancing performance optimality against constraint satisfaction. \\
\midrule
\multicolumn{2}{l}{\textit{Contextual Bandits and Loss Estimation}} \\
\midrule
$\mathcal{U}_{n}$ & Set of valid upstream destinations for intermediate node $n$. \\
$\mathcal{H}_{y}, \theta_{h}$ & Set of experts for task type $y$, and the deterministic offloading threshold associated with expert $h$. \\
$\mathcal{E}_{n}(y), e$ & Joint expert space for task type $y$ at node $n$ ($\mathcal{H}_{y} \times \mathcal{U}_{n}$), and a joint expert $e = (h,n^{\prime})$. \\
$w_{n,y}^{e}(t)$ & Probability weight for joint expert $e$ at node $n$ for task type $y$ in slot $t$. \\
$p_{n}^{j}(a)$ & Probability of offloading job $j$ to action $a$. \\
$F_{n}(j,\pi_{n}), \bar{f}_{n}(j,\pi_{n})$ & Random per-job loss at node $n$ under $\pi_{n}$, and the deterministic expected loss. \\
$f_{n}^{n^{\prime}}(j,\pi_{h})$ & True full-feedback loss evaluated under deterministic threshold rule $\pi_{h}$. \\
$\hat{g}_{n,y}^{(h,n^{\prime})}(t-1)$ & Deterministic, realized cumulative loss up to slot $t-1$. \\
$\mathbbm{1}_{\text{fb},n}(j), \rho_{n}^{\boldsymbol{\pi}}(j)$ & Random feedback indicator, and the deterministic marginal probability job $j$ reaches the oracle. \\
$\hat{F}_{\text{naive},n}^{n'}(j,\pi_h), \hat{f}_{\text{naive},n}^{n'}(j,\pi_h)$ & Random naive unbiased estimator of the expert loss, and its realized value. \\
$\hat{F}_{\text{vr}, n}^{n'}(j,\pi_h), \hat{f}_{\text{vr}, n}^{n'}(j,\pi_h)$ & Random variance-reduced expert loss estimator, and its realized value. \\
$\bar{f}_{n,y}^{n^{\prime}}(\pi_{h})$ & Deterministic baseline term (unbiased estimate of conditional theoretical loss) for task type $y$. \\
$R_{n,y}(\Gamma)$ & Expected regret of node $n$ for task type $y$ relative to the best fixed policy over horizon $\Gamma$. \\
\bottomrule
\end{tabular}}
\end{table*}

\end{document}